\documentclass{article}


\usepackage[final]{neurips_2023}
\usepackage{natbib}

\setcitestyle{numbers,square}


\usepackage{algorithm}
\usepackage{algorithmic}
\usepackage{multirow}
\usepackage{graphicx}
\usepackage[utf8]{inputenc} 
\usepackage[T1]{fontenc}    
\usepackage{hyperref}       
\hypersetup{
colorlinks=True
}
\usepackage{url}            
\usepackage{booktabs}       
\usepackage{amsfonts}       
\usepackage{nicefrac}       
\usepackage{microtype}      
\usepackage{xcolor}         
\usepackage{graphicx}
\usepackage{subcaption}
\usepackage{amsthm}
\usepackage{amsmath}

\title{Beyond Myopia: Learning from Positive and Unlabeled Data through Holistic Predictive Trends}

%

\author{
Xinrui Wang\textsuperscript{}\footnotemark[1]\quad Wenhai Wan\textsuperscript{}\footnotemark[1]\quad Chuanxin Geng\textsuperscript{}\quad Shaoyuan Li\footnotemark[2]\quad Songcan Chen\footnotemark[2]\\
\small College of Computer Science and Technology, Nanjing University of Aeronautics and Astronautics\\
\small MIIT Key Laboratory of Pattern Analysis and Machine Intelligence
}


\begin{document}
 \bibliographystyle{abbrv}

\maketitle
\renewcommand{\thefootnote}{\fnsymbol{footnote}}
\footnotetext[1]{Equal contribution: Xinrui Wang <wangxinrui@nuaa.edu.cn> and Wenhai Wan <wwh35@nuaa.edu.cn>}
\footnotetext[2]{Corresponding authors: Songcan Chen <s.chen@nuaa.edu.cn> and Shaoyuan Li <lisy@nuaa.edu.cn>.}
\renewcommand{\thefootnote}{\arabic{footnote}}

\begin{abstract}

Learning binary classifiers from positive and unlabeled data (PUL) is vital in many real-world applications, especially when verifying negative examples is difficult. Despite the impressive empirical performance of recent PUL methods, challenges like accumulated errors and increased estimation bias persist due to the absence of negative labels. In this paper, we unveil an intriguing yet long-overlooked observation in PUL: \textit{resampling the positive data in each training iteration to ensure a balanced distribution between positive and unlabeled examples results in strong early-stage performance. Furthermore, predictive trends for positive and negative classes display distinctly different patterns.} Specifically, the scores (output probability) of unlabeled negative examples consistently decrease, while those of unlabeled positive examples show largely chaotic trends. Instead of focusing on classification within individual time frames, we innovatively adopt a holistic approach, interpreting the scores of each example as a temporal point process (TPP). This reformulates the core problem of PUL as recognizing trends in these scores. We then propose a novel TPP-inspired measure for trend detection and prove its asymptotic unbiasedness in predicting changes. Notably, our method accomplishes PUL without requiring additional parameter tuning or prior assumptions, offering an alternative perspective for tackling this problem. Extensive experiments verify the superiority of our method, particularly in a highly imbalanced real-world setting, where it achieves improvements of up to $11.3\%$ in key metrics. The code is available at \href{https://github.com/wxr99/HolisticPU}{https://github.com/wxr99/HolisticPU}.
\end{abstract}

\section{Introduction}

Positive and Unlabeled Learning (PUL) is a binary classification task that involves limited positive labeled data and a large amount of unlabeled data \cite{liupu}.  This learning scenario naturally arises in many real-world applications like matrix completion\cite{hsieh}, deceptive reviews detection\cite{ren}, fraud detection\cite{outlierdetection} and medical diagnosis\cite{yang}.  It also serves as a key component of more complex machine learning problems, such as out-of-distribution detection\cite{zhou2021step} and adversarial training\cite{guo2020positive}. Two main categories of PUL methods are cost-sensitive methods and sample-selection methods. However, both approaches face their challenges. The cost-sensitive methods rely on the negativity assumption, which may introduce estimation bias due to the mislabeling of positive examples as negative\cite{noisypu}. This bias can be accumulated and even worsen during later training stages, making its elimination challenging. The sample-selection methods struggle with distinguishing reliable negative examples, particularly during the initial stage, which also results in error accumulation during the training process\cite{twosteprelabeling,twostepbase}.

As a basic component for various PUL methods, resampling the positive labeled data shows its potential in alleviating the bias brought by negative assumption \cite{noisypu,mixpul,kato2019learning,liyour,zhaodist}. For example, \cite{kato2019learning} resamples positive examples according to the given class prior and assumed label mechanism to achieve decent performance. In this paper, we dive deeper into this class of strategies. Instead of relying on one single-step prediction which is prone to model uncertainty, we take a holistic view and examine the predictive trend of unlabeled data during the training process. Specifically, we treat the unlabeled data as negative. In each training epoch, we resample over the labeled positive data to ensure a balanced class distribution. We evaluate the model's performance on CIFAR10 and FMNIST datasets\cite{cifar,xiaofashion} with 4 experimental settings. Our pilot experiments show that this resampling method achieves comparable or even state-of-the-art test performance at the outset, but underperforms soon after. Furthermore, the averaged predicting scores (output probability) of unlabeled negative examples exhibit a consistent decrease, whereas those of unlabeled positive examples display an initial increase before subsequent decreasing or oscillating. Conclusively, the averaged predictive trends for different classes exhibit significant differences, as depicted in Figure \ref{trend}. One possible explanation for these observations is the model's early focus on learning simpler patterns, which aligns with the early learning theory of noisy labels \cite{early}. Although the resampling strategy enjoys these advantages, selecting an appropriate model can be more challenging than the classification task itself due to the lack of a precise validation set. 

To break the above limitation, we propose a novel approach that treats the predicting scores of each unlabeled training example as a temporal point process (TPP). It takes a holistic view and surpasses existing methods that focus on examining loss values or tuning confidence thresholds based on a limited history of predictions. By centering on the difference in trends of predicting scores, our approach provides a more comprehensive understanding of deep neural network training in PUL. To further investigate whether this difference in trends is prevalent in individual unlabeled examples, we apply the Mann-Kendall Test, a non-parametric statistical test used to detect trends in the temporal point process \cite{hamed1998modified}, to the continuously predicting scores of each example. These scores are classified into three types: \textit{Decreasing}, \textit{Increasing}, and \textit{No Trend}. The statistical test reveals a clear distinction in the trends of predicted scores for each positive and negative example, supporting our observation. Our findings suggest that utilizing the model's classification ability in the early stages may be sufficient for successfully classifying unlabeled examples. This discovery offers us a new perspective on reformulating the problem of distinguishing positive and negative examples in the unlabeled set as identification of their corresponding predictive trends.

\begin{figure}[t]
  \centering
  \begin{subfigure}[b]{0.49\textwidth}
    \centering
    \includegraphics[width=\textwidth]{./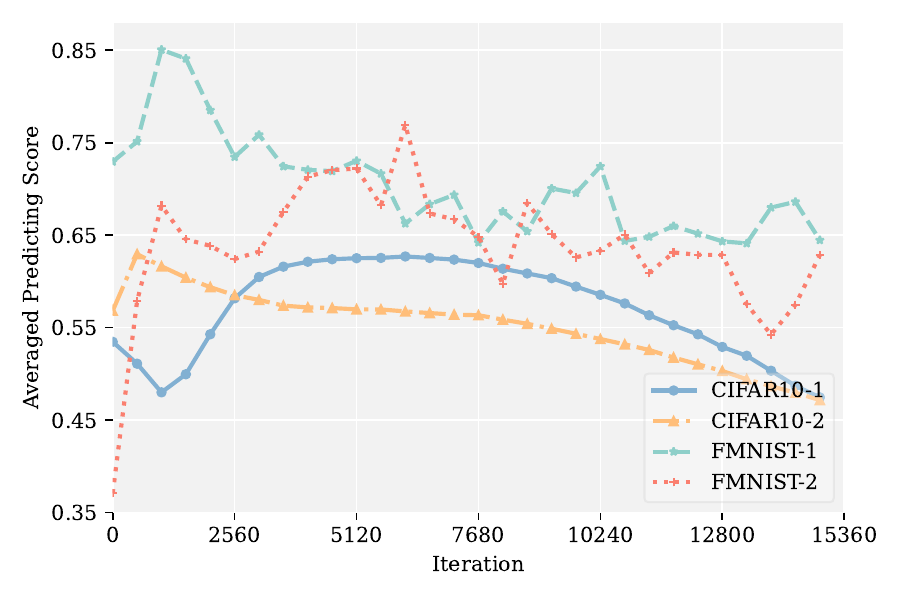}
    \label{positive_trend}
  \end{subfigure}
  \hfill
  \begin{subfigure}[b]{0.49\textwidth}
    \centering
    \includegraphics[width=\textwidth]{./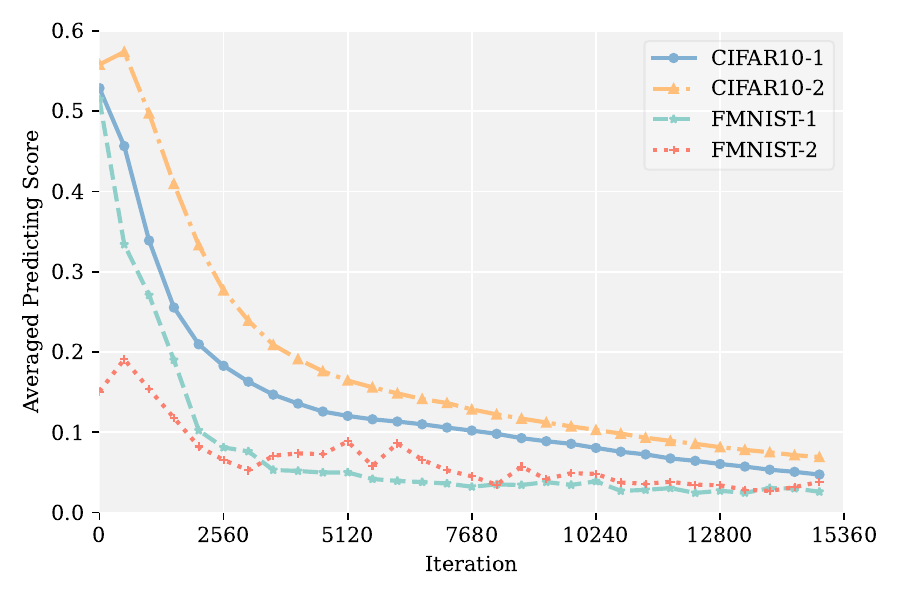}
    \label{negative_trend_statistic}
  \end{subfigure}
  \hfill
\caption{Averaged predicting scores (output probability) of positive (left) and negative (right) examples in an unlabeled dataset during the first 15,360 iterations of training (30 epochs).}
\label{trend}
\vspace{-18pt}
\end{figure}

We then propose a novel TPP-inspired measure, called \textbf{trend score} to quantify the distinctions in predictive trends. It is obtained by applying a robust mean estimator \cite{catoni2012challenging} to the expected value of the ordered difference in a TPP (sequence of predicting scores for each example)\cite{hamed2008trend}. Subsequently, we introduce a modified version of Fisher's Natural Break to distinguish these predictive trends, identifying a natural break point in the distribution of \textbf{trend score}. This approach divides examples into two groups: the group with \textbf{high trend score} represents positive examples, while the group with \textbf{low trend score} corresponds to negative examples. Our approach simplifies the training process by circumventing threshold selection when assigning pseudo-labels. Once the unlabeled data is classified, the remaining problem becomes a binary supervised learning task, and issues such as estimating class priors can be easily addressed. In summary, our main contributions are:

\begin{itemize}
\item We demonstrate the effectiveness of the proposed resampling strategy. It is also observed that predictive trends for each example can serve as an important metric for discriminating the categories of unlabeled data, providing a novel perspective for PUL.

\item We propose a new measure, \textbf{trend score}, which is proved to be asymptotically unbiased in the change of predicting scores. We then introduce a modified version of Fisher's Natural Break with lower time complexity to identify statistically significant partitions. This process does not require additional tuning efforts and prior assumptions.

\item We evaluate our proposed method with various state-of-the-art approaches to confirm its superiority. Our method also achieves a significant performance improvement in a highly imbalanced real-world setting.
\end{itemize}
\vspace*{-6pt}
\section{Our Intuition and Method}
\vspace*{-6pt}
\subsection{Preliminary}
We first revisit some important notations in PUL. Formally, let $x\in \mathbb{R}^d$ be the input data with $d$ dimensions and $y\in \{0,1\}$ be the corresponding label. Different from the traditional binary classification, PUL dataset is composed of a positive set $\mathcal{P}=\{x_i,y_i=0\}^{n_p}_{i=1}$ and an unlabeled set $\mathcal{U}=\{x_i\}^{n_u}_{i=1}$, where the unlabeled set $\mathcal{U}$ contains both positive and negative data. Throughout the paper, we denote the positive class prior as $\pi = \mathbb{P}(y = 0)$.
\vspace{-5pt}
\subsection{Resampling Strategies for Positive and Unlabeled Learning}
\begin{figure}[ht]
\vspace{-4pt}
    \centering
    \includegraphics[scale=0.303]{./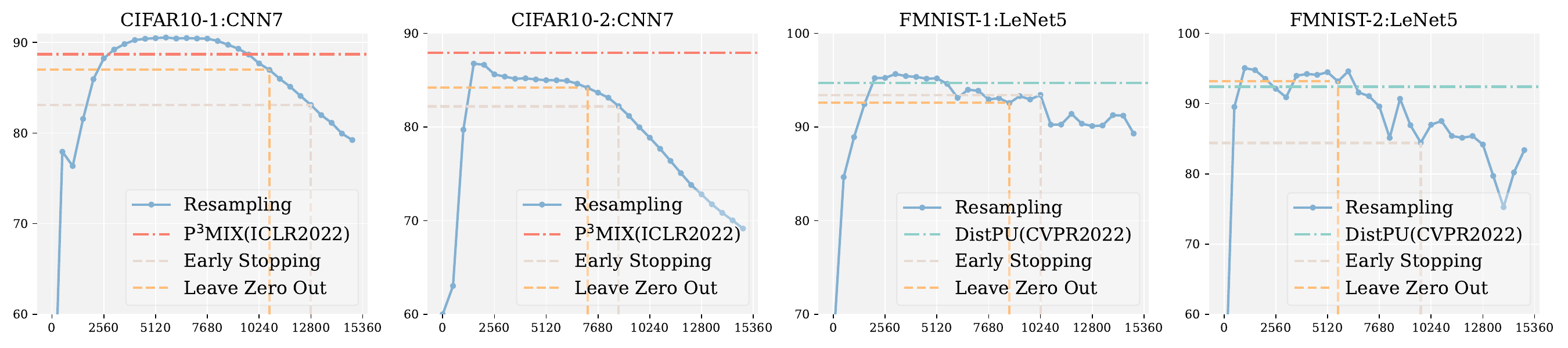} 
    \caption{The accuracy of our resampling method (first 30 epochs). The horizontal line represents the accuracy of the state-of-the-art methods. Early stopping and Leave Zero Out represent different model selection strategies.}
    \label{performance}
\vspace{-4pt}
\end{figure}
Resampling strategies have long been a baseline for dealing with imbalanced data or limited labels, which naturally fits PUL since its key challenge lies in limited labels and potentially imbalanced data distribution\cite{chawla2002smote}. Different from popular resampling strategies applied in PUL\cite{kato2019learning}, we follow the training scheme as \cite{fixmatch,flexmatch} to independently sample positive and unlabeled data as different data batches and the loss function is defined accordingly.
\begin{equation}
\mathcal{L}=\frac{1}{|\mathcal{B}p|}\sum_{(x_i,y_i)\in \mathcal{B}_p}\ell(\hat{y_i},y_i) + \frac{1}{|\mathcal{B}u|}\sum_{x_i\in \mathcal{B}_u}\ell(\hat{y_i},1), \hspace{0.4cm}
\hat{y_i} = f(x_i).
\label{loss}
\end{equation}
Here, we denote $f\in\mathcal{F}$ as a binary classifier, $\ell(\cdot,\cdot)$ as the loss function, $\mathcal{B}_p$ and $\mathcal{B}_u$ as the positive and unlabeled training batches respectively. We ensure that $|\mathcal{B}_p|=|\mathcal{B}_u|$ to achieve a balanced class prior during the training process. This approach emphasizes the labeled data and mitigates the imbalance of positive and pseudo-negative labels, which also provides a good theoretical explanation when dealing with high-dimensional data conforming to different Gaussian distributions. As shown in Appendix\ref{decision hyperplane}, an optimal decision hyperplane can be attained when $|\mathcal{P}|/|\mathcal{U}|$ equals 1. Figure\ref{performance} details the performance of our resampling baseline on two datasets under four different settings. It can be observed that the proposed method performs comparably or even better than state-of-the-art methods (P$^3$MIX\cite{liyour} and DistPU\cite{zhaodist}) in the early stages of training, as demonstrated by its test performance at certain epochs. However, the method's performance quickly degrades in all 4 settings as the estimation bias worsens during training due to the false negatives introduced by the negativity assumption. We also explore alternative model selection strategies, such as holding out a validation set from given labeled examples or using different versions of augmented data for model selection, as inspired by prior studies \cite{leave, mahsereci2017early}. In addition to the common practice of selecting the model from an additional positive validation set, we also implement LZO\cite{leave}, which selects the model based on the mixup-induced validation set. As shown in Table\ref{tab_resample}, the performance gap persists, especially when most of the unlabeled data belongs to the positive class.

\begin{table*}[htbp]
\caption{Classification accuracy (Recall rate is reported on Credit Card) on unlabeled training data. Resampling-P represents the model selected on an extra positive validation set. Resampling-LZO represents the model selected through LZO. Resampling* represents the best model selected on the test set which is an ideal case.} 
\small
\centering
\setlength\tabcolsep{5pt}
\resizebox{1\textwidth}{!}
{
    \begin{tabular}{ccccccccc}
    \toprule[1.5pt]
    Dataset & F-MNIST-1 & F-MNIST-2 & CIFAR10-1 & CIFAR10-2 & STL10-1 & STL10-2 & Credit Card & Alzheimer\\ \midrule[1pt]
    Resampling-P & 89.93 &84.29 &81.06 &72.93 &- &- &60.75 &70.09\\
    Resampling-LZO & 93.37 &92.04 &84.87 &82.98 &- &- &67.24 &74.11\\
    Resampling*& 94.92 &94.57 &89.56 &85.46 &- &- &87.54 &76.30\\ 
     P$^3$MIX-C  & 91.59 &87.65 &86.05 &88.14 &- &- &76.21 &68.01\\ \bottomrule[1.5pt]
    \end{tabular}
}

\label{tab_resample}
\end{table*}

\label{early stopping} 
 

To tackle the above issues, some denoising-based semi-supervised PUL methods, such as \cite{selfpu, mixpul, noisypu}, have leveraged some threshold tuning or sample selection techniques to achieve acceptable empirical performance. These techniques have been criticized in \cite{xia2021sample} for relying solely on prediction scores or loss values, as they do not account for uncertainty in the selection process. This becomes even more problematic in PUL, where the noise ratio is typically higher when making a negativity assumption\cite{bekker}. 

To break the above limitations, we record the whole predicting process of each unlabeled training example to take a holistic view of the training. It is evident that averaged model-predicting scores for positive and negative data display two distinct trends when implementing the above resampling strategy in the early training stages. Meanwhile, the standard deviation of predictions for positive examples increases rapidly during training, making it increasingly difficult to select an appropriate threshold for distinguishing between positive and negative examples. The appropriate threshold interval for discriminating positive and negative examples quickly shrinks as training progresses, indicating that existing denoising techniques cannot fundamentally alleviate the issues of accumulated errors and increased estimation bias. Therefore, a more robust evaluation measure is necessary beyond relying on raw model-predicted scores or loss values. Implementation details in model selection and visualizations of threshold tuning are provided in Appendix\ref{resampling model}.

\subsection{Identifying Predictive Trends: A Key to Successful Classification}
\vspace{-5pt}
\label{sectrend}
\begin{figure}[ht]
    \centering
    \includegraphics[scale=0.303]{./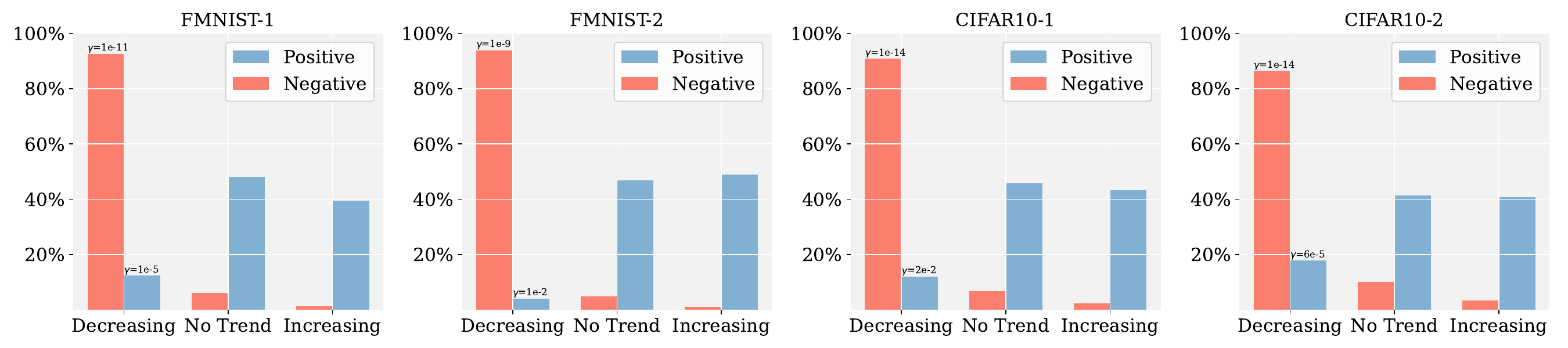} 
    \caption{The Mann-Kendall Test is performed on 4 settings of CIFAR10 and FashionMnist datasets. The figure reports the fractions of positive and negative examples in an unlabeled dataset exhibiting different predictive trends during the early training stage (first 30 epochs).}
    \label{trends_statistic}
\end{figure}

While deep neural networks have strong learning capabilities, they are at risk of overfitting all provided labels, regardless of their correctness. This can result in all unlabeled examples being predicted as negative \cite{arpit2017closer, zhang2021understanding}. We expect the predictive scores of negative examples in the unlabeled set to consistently decrease because all negative examples are given true negative labels by the negativity assumption. On the other hand, the predictive scores of positive examples in the unlabeled training set may not decrease initially because the resampled labeled examples are consistently emphasized from the start of training. To provide more evidence, we use the Mann-Kendall test to analyze the model-predicted scores of each example \cite{hamed1998modified}. This test categorizes the prediction sequence into three situations: \textit{Decreasing}, \textit{Increasing}, and \textit{No Trend}. The calculation process of the Mann-Kendall Test is detailed in Appendix\ref{MK test}. Figure \ref{trends_statistic} shows a contrast between the trends of predicted scores for positive and negative examples. Even when certain positive and negative examples exhibit a similar trend of decreasing prediction scores during training, we observed significant differences in the significance index $\gamma$ across different classes.

Our next objective is to measure the differences between positive and negative examples. To accomplish this, we require an evaluation measure that captures the significance of the observed trends in model-predicted scores. Before developing our own measure, an important notation in the TPP is first introduced, $\mathbb{E}[\Delta p]$, which represents the expected value of the ordered difference in a series of predicting scores. 
\begin{equation}
\vspace{-5pt}
\mathbb{E}[\Delta p]=\lim_{t\to\infty}\frac{2}{t(t-1)}\sum_{i<j}^{t}\Delta p_{ij}, \hspace{0.1cm}\Delta p_{ij}=p_j-p_i.
\end{equation} 
where $p_i$ is the predicting score (output probability) at $i$-th epoch, $t$ is the number of training epochs. 
\begin{equation}
\Tilde{S} = \frac{2}{t(t-1)}\sum_{i=1}^{t-1} \sum_{j=i+1}^{t} \Delta p_{ij}, \hspace{0.1cm}\Delta p_{ij}=p_j-p_i.
\end{equation} 
While $\Tilde{S}$ is the empirical mean and unbiased estimation of $\mathbb{E}[\Delta p]$, it can be unreliable for non-Gaussian examples and may not handle outliers or heavy-tailed data distributions well as illustrated in\cite{catoni2012challenging}. To address these issues, we propose a robust mean estimator inspired by\cite{xia2021sample, hamed1998modified}, called the \textbf{trend score} $S$, which measures the difference between each ordered pair of prediction scores:
\vspace{-0.1cm}
\begin{equation}
\hat{S} = \displaystyle\frac{2}{t(t-1)}\sum_{i=1}^{t-1} \sum_{j=i+1}^{t} \psi(\alpha\Delta p_{ij}),\hspace{0.1cm}\Delta p_{ij}=p_j-p_i.
\label{eq_trendscore}
\end{equation}
\begin{equation}
\psi(\Delta p_{ij}) =sign(\Delta p_{ij}) \cdot  log(1+|\Delta p_{ij}|+\Delta p_{ij}^2/2).
\end{equation}
\vspace{0.05cm}
in which $\alpha>0$ is a scaling parameter, and $sign()$ is the sign function that returns $-1$ if its argument is negative, $0$ if its argument is zero, and $1$ if its argument is positive. The function $\psi()$ can result in a more robust estimation by flattening the values of $\Delta p_{ij}$ and reducing the influence of minority outlier points on the overall estimation. Besides, we also provide a simplified version as:
\begin{equation}
    \dot{S} = \frac{1}{t-1}\sum_{i=1}^{t-1}  \psi(\alpha\Delta p_{ij}), \hspace{0.1cm}\Delta p_{ij}=p_j-p_i.
    \label{eq_simplified_trendscore}
\end{equation}
Notably, $\Tilde{S},\hat{S},\dot{S}$ are all calculated on each example. Experiments show that both $\hat{S},\dot{S}$ exhibit better empirical results than $\Tilde{S}$ in Section\ref{expriment}. For choosing the stopping epoch $t$, we implement the LZO\cite{leave} algorithm as described in Section\ref{early stopping}. We also derive a concentration inequality between our \textbf{trend score} $\hat{S}$ and the expected value of the ordered difference $\mathbb{E}[\Delta p]$.
\newtheorem{theorem}{Theorem}[section]
\begin{theorem}
Let $P=\{p_{ij}|1\leq i\leq t-1, 2\leq j \leq t,i<j\}$ be an observation set of changes in predictions in which $\mathbb{E}[\Delta p]$ is the expected values of the ordered difference in a temporal point process and $\sigma^2$ is the variance of $P$. By exploiting the non-decreasing influence function $\psi()$, for any $\epsilon>0$, we have the following bound with probability at least $1-2\epsilon$:
\begin{equation}
        |\hat{S}-\alpha\mathbb{E}[\Delta p]|<\frac{2\alpha\sigma\sqrt{\frac{2log(\epsilon^{-1})}{t(t-1)}}}{1-\sqrt{\frac{2log(\epsilon^{-1})}{t(t-1)\alpha^2\sigma^2}}}=\textit{O}\Big(\big({log(\epsilon^{-1}})\big)^\frac{1}{2}t^{-1}\Big).
    \end{equation}
\end{theorem}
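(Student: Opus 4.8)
The statement is a Catoni-type concentration bound, so the plan is to treat $\hat S$ as a robust (soft-truncated) estimator of $\alpha\mathbb{E}[\Delta p]$ assembled from the $N=\tfrac{t(t-1)}{2}$ pairwise increments $\Delta p_{ij}$, and to prove a one-sided exponential tail bound on each side, each holding with probability at least $1-\epsilon$; a union bound then yields the two-sided inequality at confidence $1-2\epsilon$. The engine throughout is the exponential Markov (Chernoff) inequality applied to $\sum_{i<j}\psi(\alpha\Delta p_{ij})$, rather than a second-moment/Chebyshev estimate, since only the exponential route produces the subgaussian-in-$\log(\epsilon^{-1})$ tail and the $t^{-1}$ rate.

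First I would record the one analytic property of the influence function that makes everything work: for every real $x$,
\[
 -\log\!\big(1-x+\tfrac{x^2}{2}\big)\ \le\ \psi(x)\ \le\ \log\!\big(1+x+\tfrac{x^2}{2}\big).
\]
For the $\psi$ defined in \eqref{eq_trendscore} these in fact hold with equality on the branch selected by $sign(x)$, and their mutual consistency is checked from $(1-x+\tfrac{x^2}{2})(1+x+\tfrac{x^2}{2})=1+\tfrac{x^4}{4}\ge 1$; note also that the two quadratics are strictly positive (their discriminant is $-1$), so the logarithms are well defined. Exponentiating the upper bound yields $\mathbb{E}\big[e^{\psi(\alpha\Delta p)}\big]\le 1+\alpha\mathbb{E}[\Delta p]+\tfrac{\alpha^2}{2}\mathbb{E}[(\Delta p)^2]\le \exp\!\Big(\alpha\mathbb{E}[\Delta p]+\tfrac{\alpha^2}{2}\big(\sigma^2+(\mathbb{E}[\Delta p])^2\big)\Big)$, which controls the moment generating function in terms of $\mathbb{E}[\Delta p]$ and the variance $\sigma^2$.

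Next I would run the Chernoff step. Factorizing the joint exponential moment of $\sum_{i<j}\psi(\alpha\Delta p_{ij})$ over the $N$ pairs and inserting the per-term bound above, Markov's inequality gives $\mathbb{P}(\hat S-\alpha\mathbb{E}[\Delta p]\ge r)\le \exp(-N g(r))$ for an explicit $g$; setting the right-hand side equal to $\epsilon$ makes the admissible deviation $r$ the smaller root of a quadratic, and rationalizing that root (writing $1-\sqrt{1-u}=\tfrac{u}{1+\sqrt{1-u}}$ and then bounding the denominator) produces exactly the fraction $\tfrac{2\alpha\sigma\sqrt{v}}{1-\sqrt{v/(\alpha^2\sigma^2)}}$ with $v=\tfrac{2\log(\epsilon^{-1})}{t(t-1)}$. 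The symmetric lower tail uses the lower bound on $\psi$ in the identical way, and the union bound over the two events delivers the stated two-sided inequality. The $O\big((\log(\epsilon^{-1}))^{1/2}t^{-1}\big)$ claim is then immediate: as $t\to\infty$ we have $v\to 0$, so the numerator scales like $2\alpha\sigma\sqrt{v}\asymp t^{-1}(\log(\epsilon^{-1}))^{1/2}$ while the denominator tends to $1$.

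The main obstacle is that the increments $\{\Delta p_{ij}\}$ are \emph{not} independent: since $\Delta p_{ij}=p_j-p_i$ and $\Delta p_{ik}=p_k-p_i$ share the epoch-score $p_i$, the sum $\sum_{i<j}\psi(\alpha\Delta p_{ij})$ is a degree-two U-statistic in $(p_1,\dots,p_t)$ and its exponential moment does not factor cleanly. The honest way to handle this is either a Hoeffding decomposition followed by a U-statistic Chernoff bound, or to follow the paper's apparent convention of regarding $P=\{p_{ij}\}$ as the sample population with variance $\sigma^2$ and applying the Catoni estimate at that level; either way the clean $t^{-1}$ rate is precisely what one expects from the effective sample size $N=\binom{t}{2}\asymp t^2$. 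I would therefore concentrate most of the effort on making the factorization/independence step rigorous (or on stating it explicitly as the operating assumption), since the remainder — verifying the $\psi$ inequalities, performing the Chernoff optimization, and identifying the quadratic root — is routine.
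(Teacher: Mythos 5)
Your proposal correctly identifies the Catoni framework, states the right influence-function inequalities $-\log(1-x+x^2/2)\leq\psi(x)\leq\log(1+x+x^2/2)$, and your observation that the increments $\Delta p_{ij}$ are dependent (they share epoch scores, so the exponential moment does not factor) is well taken --- in fact the paper's own proof performs exactly that factorization without comment, so on this point you are more careful than the paper. However, the central step of your plan has a genuine gap: a Chernoff bound applied to the \emph{uncentered} sum $\sum_{i<j}\psi(\alpha\Delta p_{ij})$ cannot yield the claimed bound. Because the scale $\alpha$ sits inside $\psi$, the exponential Markov inequality is forced to run at unit temperature, and carrying out your step explicitly (granting independence) gives, with $m=\mathbb{E}[\Delta p]$,
\[
\mathbb{P}\Big(\hat S-\alpha m\ \ge\ \tfrac{\alpha^2}{2}\big(\sigma^2+m^2\big)+\tfrac{2\log(\epsilon^{-1})}{t(t-1)}\Big)\ \le\ \epsilon .
\]
The rate function here is \emph{linear} in the deviation with a constant offset: no quadratic equation in $r$ ever appears, so the fraction with denominator $1-\sqrt{2\log(\epsilon^{-1})/(t(t-1)\alpha^2\sigma^2)}$ cannot be recovered, and the bias term $\tfrac{\alpha^2}{2}(\sigma^2+m^2)$ does not decay in $t$ for fixed $\alpha$. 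This failure is structural, not presentational: the plug-in soft-truncated mean converges to $\mathbb{E}[\psi(\alpha\Delta p)]$, which generically differs from $\alpha\mathbb{E}[\Delta p]$ by an amount of order $\alpha^2$, so no $O(t^{-1})$ concentration around $\alpha\mathbb{E}[\Delta p]$ can hold for it unless $\alpha$ is itself sent to $0$ as $t$ grows --- which contradicts the theorem, where $\alpha$ is a fixed free parameter appearing on both sides.

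The missing idea --- and what the paper actually does, following \cite{catoni2012challenging} and \cite{chen2021generalized} --- is to center \emph{inside} the influence function. The object analyzed in the appendix is not the plug-in average of Eq.~(4) but the implicit M-estimator $\hat S_\beta$ defined as the root of $S\mapsto\sum_{i<j}\psi\big(\beta(\alpha\Delta p_{ij}-S)\big)=0$. For that object, the exponential moment bounds produce high-probability envelopes $B_\pm(S)$ that are \emph{quadratic in} $(\alpha\tilde S-S)$ precisely because the candidate value $S$ appears inside $\psi$; monotonicity of $\psi$ (hence of $r(S)$) then localizes the root $\hat S_\beta$ between the zeros $S_\mp$ of these envelopes, and solving $B_+(S)=0$ with the choice $\beta=\sqrt{2\log(\epsilon^{-1})/(t(t-1))}\,/(\alpha\sigma)$ is exactly what generates the stated fraction. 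In other words, the step you dismissed as routine (``identifying the quadratic root'') is the step that only exists in the centered, implicit formulation, and it is the heart of the proof. As a side remark, the paper's argument also silently identifies $\mathbb{E}[\Delta p]$ with the empirical mean $\tilde S$ and assumes independence of the pairs, so the dependence gap you flagged survives in the official proof too; but the M-estimator centering is the essential ingredient your route lacks, and without it your approach proves a different (and much weaker) statement.
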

It illustrates that the measure we propose is an asymptotically unbiased estimation with a linear weighting of $\mathbb{E}[\Delta p]$. The proof is provided in Appendix\ref{proof}. It is also proved in \cite{catoni2012challenging} that the deviations of this robust mean estimator can be of the same order as the deviations of the empirical mean computed from a Gaussian statistical sample, which further verifies the advantage of this estimator. 

\subsection{Clustering Unlabeled Data by the Fisher Criterion}
\vspace{-5pt}
\label{sec pl}
The topic of accurately labeling unlabeled data is widely discussed in various fields, including PUL. In the existing literature, threshold-based criteria and small loss criteria are the two primary approaches used for selecting reliable or clean examples, as seen in studies such as \cite{fixmatch, flexmatch, dividemix, noisypu}. However, previous works generally select examples based solely on current predictions, ignoring the inherent uncertainty in training examples, leading to longer training times and poor generalization ability\cite{xia2021sample, moore1990uncertainty}. Besides, they often require extensive hyperparameter tuning efforts to choose appropriate thresholds or ratios for data selection. In this section, we introduce a new labeling approach based on our proposed \textbf{trend score} tackling the above issues.

Our proposed \textbf{trend score} is the naturally comparable one-dimensional data and allows the Fisher Criterion to be a viable choice. It identifies a natural break point in the trend score distribution, which could be used to divide the data into two groups: one with high trend scores and one with low trend scores representing positive and negative examples respectively. Specifically, the objective function of finding this Fisher's natural break point can be formed as follows:
\begin{equation}
\label{eq_cluster}
    \begin{aligned}
    &\min_{C_1, C_2}  \displaystyle\frac{\sum_{x \in C_1} (\hat{S}_x - \mu_1)^2}{|C_1|} + \displaystyle\frac{\sum_{x \in C_2} (\hat{S}_x - \mu_2)^2}{|C_2|}\\
    &s.t.\hspace{0.1cm}C_1 \cap C_2 = \emptyset, \hspace{0.1cm} C_1 \cup C_2 = {x_1, x_2, \ldots, x_N}.
    \end{aligned}
\end{equation}
where $\hat{S}_x$ is our derived \textbf{trend score} for example $x$, $C_1$ and $C_2$ are the two clusters, $\mu_i$ is the mean of cluster $C_i$, and $N$ is the total number of data points. We utilize the Fisher natural break point method to automatically determine a threshold value that divided the trend score distribution into two distinct groups. Our implementation introduces an improved algorithm, which reduces the time complexity from $\textit{O}\big(N^2\big)$ to $\textit{O}\big(Nlog(N)\big)$, as explained in Appendix\ref{Natural Breaks Classification}. This method eliminates the need for manual threshold selection or hyperparameter tuning, both of which can be time-consuming and error-prone. Furthermore, the data-driven approach we used optimizes the threshold value for the specific dataset under analysis, rather than relying on arbitrary or pre-defined values. 

Once the unlabeled data is classified, the remaining task becomes a straightforward supervised learning problem. We directly train by a cross-entropy loss on the estimated labels given by Eq.\ref{eq_cluster} on the backbone network given in Table\ref{tab_dataset}. Besides, issues such as estimating class priors can be addressed easily when unlabeled data are classified.
\vspace*{-6pt}
\section{Experiments}
\vspace{-5pt}
\label{expriment}
\subsection{Classification on Unlabeled Training Set}
\vspace*{-5pt}
In this subsection, we first evaluate the performance of our method on the unlabeled training set compared with some state-of-the-art methods. As shown in Table\ref{tab_transductive}, our method demonstrates excellent classification performance on the unlabeled training data (the true labels of unlabeled data are not available in STL10). Moreover, a comparison with state-of-the-art prior estimation methods in PUL is conducted to further verify the effectiveness of our approach, and the results are presented in Table\ref{tab_prior estimate}.

\begin{table*}[htbp]
\caption{Classification accuracy (Recall rate is reported on Credit Card) on unlabeled training data.} 
\small
\centering
\setlength\tabcolsep{4pt}
\resizebox{1\textwidth}{!}
{
    \begin{tabular}{ccccccccc}
    \toprule[1.5pt]
    Dataset & F-MNIST-1 & F-MNIST-2 & CIFAR10-1 & CIFAR10-2 & STL10-1 & STL10-2 & Credit Card & Alzheimer\\ \midrule[1pt]
    nnPU        & 85.31 &82.46 &83.11 &83.23 &- &- &62.53 &64.01\\
    PGPU        & 92.02 &90.17 &85.67 &88.38 &- &- &42.12 &75.09\\
    Self-PU     & 94.04 &91.59 &84.06 &83.77 &- &- &71.00 &70.05\\
    P$^3$MIX-C  & 91.59 &87.65 &86.05 &88.14 &- &- &76.21 &68.01\\
    Ours        & \textbf{95.41} &\textbf{96.00} &\textbf{91.42} &\textbf{91.17} &- &- &\textbf{98.90} &\textbf{75.13}\\ \bottomrule[1.5pt]
    \end{tabular}
}
\vspace{-8pt}
\label{tab_transductive}
\end{table*}
\begin{table*}[htbp]
\caption{Absolute estimation error with the true positive prior in the first row. We implement an oracle early stopping for the extant methods as defined in \cite{mpe}. Our method significantly reduces estimation error when compared with existing methods.} 
\small
\centering
\setlength\tabcolsep{4pt}
\resizebox{1\textwidth}{!}
{
    \begin{tabular}{ccccccccc}
    \toprule[1.5pt]
    Algorithm & F-MNIST-1 & F-MNIST-2 & CIFAR10-1 & CIFAR10-2 & STL10-1 & STL10-2 & Credit Card & Alzheimer\\ \midrule[1pt]
    $\pi$  & 0.40 & 0.60 & 0.40 &0.60 &0.50 &0.50 &0.05 &0.50\\
    KM2         &0.146 &0.106 &0.115 &0.164 &0.096 &0.101 &0.236 &0.094\\
    BBE*        &0.082 &0.073 &0.034 &0.059 &0.046 &0.064 &0.112 &0.026\\
    (TED)$^n$   &0.026 &0.020 &0.042 &0.044 &0.024 &0.021 &0.018 &0.014\\
    Ours        &\textbf{0.014} &\textbf{0.021} &\textbf{0.016} &\textbf{0.031} &\textbf{0.018} &\textbf{0.009} &\textbf{0.004} &\textbf{0.011}\\ \bottomrule[1.5pt]
    \end{tabular}
}
\label{tab_prior estimate}
\vspace{-8pt}
\end{table*}

\subsection{Test Performance}
\begin{table*}[htb] 
\caption{Dataset description and corresponding backbones.}
\centering
\setlength\tabcolsep{24pt}
\small
{
    \resizebox{0.95\textwidth}{!}
    {
        \begin{tabular}{ccccc}
        \toprule[1.5pt]
        Dataset  & \#Trainset & \#Testset & Input size & Backbone    \\ 
        \midrule[1pt]
        F-MNIST  & 60,000     & 10,000    & 28$\times$28      & LeNet-5    \\
        CIFAR-10 & 50,000     & 10,000    & 3$\times$32$\times$32      & 7-Layer CNN \\
        STL-10   & 105,000    & 8,000     & 3$\times$96$\times$96      & 7-Layer CNN \\ 
        Alzheimer& 5,890      & 1,279     & 3$\times$224$\times$224  & ResNet-50 \\
        Credit Fraud &8,392    & 2098     & 30  & 6-Layer MLP \\
        \bottomrule[1.5pt]
        \end{tabular}
    }
}
\label{tab_dataset}
\vspace*{-10pt}
\end{table*}
We use three synthetic prevalent benchmark datasets including FashionMnist (F-MNIST) \cite{xiaofashion}, CIFAR10 \cite{cifar} and STL10 \cite{stl} and two real-world datasets on fraud detection\footnote{https://www.kaggle.com/datasets/mlg-ulb/creditcardfraud} and Alzheimer diagnosis\footnote{https://www.kaggle.com/datasets/tourist55/alzheimers-dataset-4-class-of-images} as our test set. We provide the dataset description and corresponding backbones in Table\ref{tab_dataset}, and the positive priors of each setting are given in Table\ref{tab_prior estimate}. More detailed description of benchmark datasets, dataset split and implementation details are given in Appendix\ref{implementaion detail}. For each dataset, we run our method for $5$ times with different random seeds and report the averaged classification accuracy. We follow the settings in \cite{mixpul,zhaodist}  when making the comparison: randomly select $769$ positive examples in Alzheimer dataset, $100$ positive examples in Credit Fraud dataset and $1000$ positive examples in others as the labeled set in training. Classification accuracy on test sets is reported as the main criterion. For highly imbalanced distributed (Credit Fraud) and biasedly selected (Alzheimer) datasets, we provide additional metrics such as Recall, F1 score and AUC on test sets for a more comprehensive comparison. 

\begin{table*}[htbp]
\caption{Results of classification accuracy ($\%$) on 3 generic datasets with 6 settings (mean±std).} 
\small
\centering
\setlength\tabcolsep{10pt}
\resizebox{0.95\textwidth}{!}
{
    \begin{tabular}{ccccccc}
    \toprule[1.5pt]
    Algorithm    & F-MNIST-1 & F-MNIST-2 & CIFAR10-1 & CIFAR10-2 & STL10-1 & STL10-2 \\ \midrule[1pt]
    uPU        & 81.6$\pm$1.2  & 85.7$\pm$2.6  & 76.5$\pm$2.5   & 71.6$\pm$1.4   & 76.7$\pm$3.8 & 78.2$\pm$4.1 \\
    nnPU       & 91.4$\pm$0.6  & 90.2$\pm$0.7  & 84.7$\pm$2.4   & 83.7$\pm$0.6   & 77.1$\pm$4.5 & 80.4$\pm$2.7 \\
    Self-PU    & 90.8$\pm$0.4  & 89.1$\pm$0.7  & 85.1$\pm$0.8   & 83.9$\pm$2.6   & 78.5$\pm$1.1 & 80.8$\pm$2.1 \\
    PAN        & 87.7$\pm$2.4  & 89.9$\pm$3.2  & 87.0$\pm$0.3   & 82.8$\pm$1.0   & 77.7$\pm$2.5 & 79.8$\pm$1.4 \\
    vPU        & 92.6$\pm$1.2  & 90.5$\pm$0.8  & 86.8$\pm$1.2   & 82.5$\pm$1.1   & 78.4$\pm$1.1 & 82.9$\pm$0.7 \\
    MIXPUL     & 90.4$\pm$1.2  & 89.6$\pm$1.2  & 87.0$\pm$1.9   & 87.0$\pm$1.1   & 77.8$\pm$0.7 & 78.9$\pm$1.9 \\
    PULNS      & 91.0$\pm$0.5  & 89.1$\pm$0.8  & 87.2$\pm$0.6   & 83.7$\pm$2.9   & 80.2$\pm$0.8 & 83.6$\pm$0.7 \\
    Dist-PU    & 94.7$\pm$0.4  & 92.4$\pm$0.4  & 86.8$\pm$0.7   & 87.2$\pm$0.9   & 79.8$\pm$0.6 & 82.9$\pm$0.4 \\
    P$^3$MIX-E & 92.6$\pm$0.4  & 91.8$\pm$0.2  & 88.2$\pm$0.4   & 84.7$\pm$0.5   & 80.2$\pm$0.9 & 83.7$\pm$0.7 \\ 
    P$^3$MIX-C & 92.8$\pm$0.6  & 90.4$\pm$0.1  & 88.7$\pm$0.4   & 87.9$\pm$0.5   & 80.7$\pm$0.7 & 84.1$\pm$0.3 \\ \midrule[1pt]
    Ours & \textbf{95.8$\pm$0.3} & \textbf{96.0$\pm$0.3} & \textbf{91.1$\pm$0.2} & \textbf{90.3$\pm$0.1}& \textbf{83.7$\pm$0.3} & \textbf{85.3$\pm$0.6}         \\ \bottomrule[1.5pt]
    \end{tabular}
}

\label{acc_generic}
\end{table*}

\subsubsection{Sythetic datasets.} Our proposed method consistently outperforms all PUL baselines by 1\% to 4\% on all generic benchmark datasets and settings, as shown in Table \ref{acc_generic}, demonstrating its superior performance. Furthermore, many existing PUL methods rely on a given positive prior or make various assumptions that are not available in real-world settings, whereas our method does not require any of them. To avoid inherent challenges such as accumulated errors and estimation bias, we transform the above challenges into a much simpler task of discerning the trend of the model-predicting scores. Considering we can achieve outstanding classification accuracy in unlabeled data, it is natural to expect our method to outperform existing PUL methods. While using some tricks for label noise learning like Co-teaching\cite{han2018co} and large loss criterion\cite{huang2019o2u} could possibly further improve the performance of our method, we believe that in most scenarios, our method can effectively solve existing PUL problems with simplicity.

\begin{table*}[htbp]
\caption{Comparative results($\%$) on Credit Card Fraud dataset (mean±std).} 
\centering
\setlength\tabcolsep{15pt}
\resizebox{0.95\textwidth}{!}
{  
    \begin{tabular}{cccccc}
    \toprule[1.5pt]
    Algorithm  & F1 score               & Recall               & Accuracy               & Precision  & AUC              \\ \midrule[1pt]
    uPU        & 89.5$\pm$3.1           & 83.4$\pm$1.3         & 97.0$\pm$0.2           & 96.5$\pm$3.6     &93.4$\pm$3.1        \\
    nnPU       & 89.9$\pm$1.0           & 83.4$\pm$1.3         & 98.4$\pm$0.1           & 97.4$\pm$1.1     &94.2$\pm$0.9        \\
    nnPU+mixup & 89.0$\pm$2.8           & 82.9$\pm$1.6         & 98.1$\pm$0.1           & 96.0$\pm$3.2     &93.8$\pm$2.9        \\
    Self-PU    & 89.0$\pm$2.4           & 85.8$\pm$2.0         & 99.2$\pm$0.1           & 92.4$\pm$3.4     &95.6$\pm$2.8       \\
    PAN        & 91.5$\pm$0.9           & 85.4$\pm$1.3         & \textbf{99.1$\pm$0.1}  & 98.5$\pm$1.0     &96.6$\pm$1.1        \\
    VPU        & 91.7$\pm$3.9           & 84.9$\pm$5.7         & 98.6$\pm$0.5           & \textbf{99.7$\pm$0.6}  &96.9$\pm$3.1  \\
    MIXPUL     & 82.9$\pm$2.8           & 86.6$\pm$1.3         & 98.4$\pm$0.3           & 79.2$\pm$3.5       &91.3$\pm$0.7      \\
    PULNS      & 89.0$\pm$2.0           & 83.2$\pm$2.1         & 99.0$\pm$0.1           & 95.6$\pm$1.9       &94.5$\pm$0.7      \\
    Dist-PU    & 87.9$\pm$3.4           & 80.2$\pm$4.1         & 98.8$\pm$0.4           & 97.2$\pm$1.6       &96.5$\pm$2.7      \\
    P$^3$MIX-E & 91.9$\pm$2.1           & 87.7$\pm$2.0         & 99.0$\pm$0.1           & 96.5$\pm$1.8      &97.5$\pm$0.9       \\
    P$^3$MIX-C & 90.2$\pm$1.4           & 86.5$\pm$1.8         & 98.8$\pm$0.1           & 94.1$\pm$1.2      &97.3$\pm$1.2       \\ \midrule[1pt]
    Our Method &\textbf{99.1$\pm$0.2}   & \textbf{99.0$\pm$0.2}& \textbf{99.1$\pm$0.1}  & 99.3$\pm$0.1     &\textbf{99.7$\pm$0.1 }        \\ \bottomrule[1.5pt]
    \end{tabular}
}

\label{tab_fraud}
\end{table*}

\begin{table*}[htbp]
\caption{Comparative results($\%$) on Alzheimer dataset (mean±std). }
\small
\centering
\setlength\tabcolsep{15pt}
\resizebox{0.95\textwidth}{!}
{

    \begin{tabular}{cccccc}
    \toprule[1.5pt]
    Algorithm  & F1 score              & Recall                     & Accuracy               & Precision             & AUC                   \\ \midrule[1pt]
    uPU        & 67.6$\pm$2.8          & 66.1$\pm$6.1               & 68.5$\pm$2.2           & 69.7$\pm$3.5          & 73.8$\pm$2.9          \\
    nnPU       & 68.6$\pm$3.2          & 69.5$\pm$7.2               & 68.3$\pm$2.1           & 68.0$\pm$2.3          & 72.9$\pm$2.8          \\
    RP         & 62.1$\pm$5.6          & 64.6$\pm$15.9              & 61.6$\pm$3.2           & 61.9$\pm$4.5          & 66.1$\pm$3.3          \\
    PUSB       & 69.2$\pm$2.4          & 69.3$\pm$2.4               & 69.2$\pm$2.4           & 69.2$\pm$2.4          & 74.4$\pm$2.4          \\
    PUbN       & 70.4$\pm$3.2          & 72.0$\pm$8.4               & 70.0$\pm$1.3           & 69.4$\pm$2.5          & 70.0$\pm$1.3          \\
    Self-PU    & 72.1$\pm$1.1          & 75.4$\pm$5.1               & 70.9$\pm$0.7           & 69.3$\pm$2.5          & 75.9$\pm$1.8          \\
    aPU        & 70.5$\pm$3.4          & 75.7$\pm$8.2               & 68.5$\pm$1.8           & 66.2$\pm$0.9          & 70.7$\pm$3.7          \\
    VPU        & 70.2$\pm$1.1          & 76.7$\pm$3.6               & 67.4$\pm$0.7           & 64.7$\pm$1.1          & 73.1$\pm$0.9          \\
    ImbPU      & 68.8$\pm$1.9          & 70.6$\pm$6.5               & 68.2$\pm$0.8           & 67.5$\pm$2.5          & 73.8$\pm$0.7          \\
    Dist-PU    & 73.7$\pm$1.6          & \textbf{80.1$\pm$5.1}      & 71.6$\pm$0.6           & 68.5$\pm$1.2          & \textbf{77.1$\pm$0.7} \\ \midrule[1pt]
    Our Method & \textbf{74.5$\pm$2.4} & 79.5$\pm$5.8               & \textbf{72.8$\pm$0.9}  & \textbf{70.2$\pm$1.6} & \textbf{77.1$\pm$2.3} \\ \bottomrule[1.5pt]
    \end{tabular}
}
\label{tab_alz}
\vspace{-10pt}
\end{table*}
\subsubsection{Real-world datasets.}  This subsection presents experimental results on two real-world datasets, including one highly imbalanced Credit Fraud dataset. In fraud detection, recall is typically more important than precision or accuracy, as the consequences of missing a fraudulent transaction can be much more severe than flagging a legitimate transaction as fraudulent. As shown in Table \ref{tab_fraud}, our proposed method achieves significantly higher recall rates and F1 scores, as well as comparable accuracy and precision, indicating its ability to better handle highly imbalanced scenarios. Our approach offers a novel perspective compared to traditional prediction-based methods, as the model's predictive trends are not affected by the positive prior, as long as the observation outlined in Section \ref{sectrend} holds. Furthermore, our method also demonstrates comparable performance on the Alzheimer dataset to the state-of-the-art method DistPU, which employs various regularization techniques and data augmentation strategies. In both two real-world settings, our method achieves a balanced good performance on all evaluation metrics which further illustrates its effectiveness.
\begin{figure}[htbp]
  \centering
  \begin{subfigure}[b]{0.49\textwidth}
    \centering
    \includegraphics[width=\textwidth]{./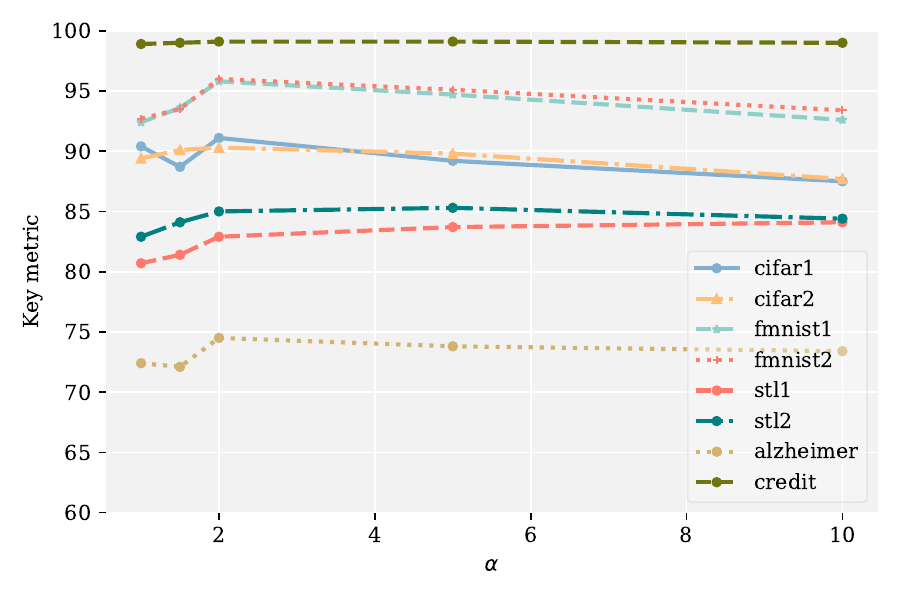}
  \end{subfigure}
  \hfill
  \begin{subfigure}[b]{0.49\textwidth}
    \centering
    \includegraphics[width=\textwidth]{./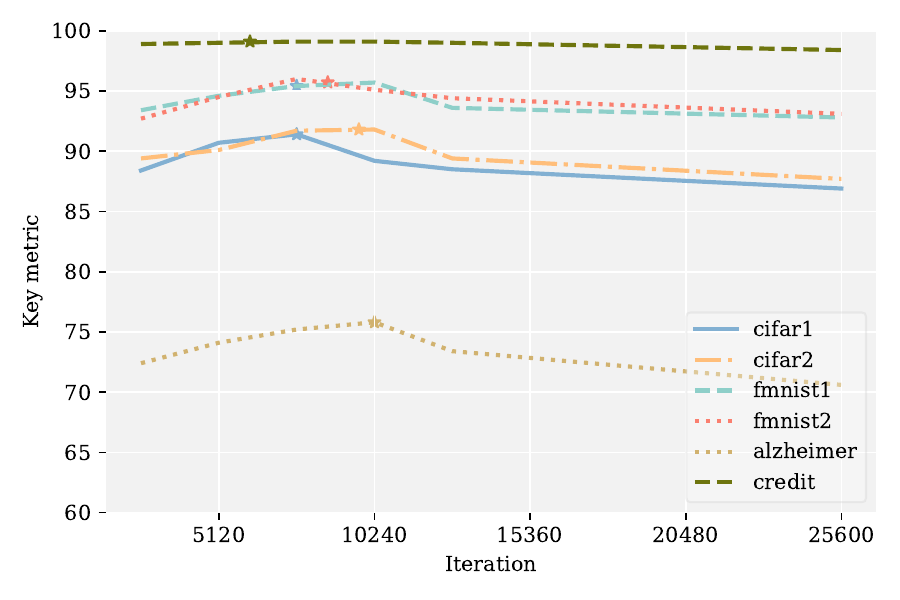}
  \end{subfigure}
  \hfill
\caption{Sensitivity analysis was performed on two parameters: $\alpha$ (left) and stopping iteration (right). The stopping iteration of LZO (also the one we use) is denoted by '$*$' on the right.}
\label{sensitivity}
\end{figure}
\begin{table}[t] 
\caption{Ablation results ($\%$) on CIFAR-10 (acc), Credit Fraud (recall) and Alzheimer (f1 score). "\checkmark" indicates the enabling of the corresponding components.}
\centering
{
    \resizebox{0.95\textwidth}{!}
    {
        \begin{tabular}{c|ccc|cc|ccc}
        \toprule[1.5pt]
        &\multicolumn{3}{c}{\textbf{Trend Measure}} &\multicolumn{2}{c}{\textbf{Clustering}} &\multicolumn{3}{c}{\textbf{Dataset}} \\ 
            \cmidrule{1-4} \cmidrule{5-6} \cmidrule{7-9}
        \textbf{Resampling} & TS   & Simplified TS  & MK & Natural break & k-means 
        & CIFAR10-1 & Credit Fraud & Alzheimer \\ \midrule[1pt]
                   & \checkmark & & &\checkmark &  &84.1  &88.6 &69.2\\
        \checkmark & \checkmark & & & &\checkmark  &89.4  &99.3 &70.5\\
        \checkmark &  & &\checkmark &\checkmark &  &90.2  &99.0 &69.7\\
        \checkmark & & \checkmark& &\checkmark &   &90.7  &99.2 &73.9\\
        \checkmark & \checkmark & & &\checkmark &  &91.1  &99.1 &74.5\\
        \bottomrule[1.5pt]
        \end{tabular}
    }
}
\label{tab_ablation}
\vspace{-8pt}
\end{table}
\subsubsection{Ablation Study.} To investigate the specific effects of different components (Resampling, \textbf{trend score}, and Fisher Natural Break Partition) in our method, we conducted a series of ablation studies and compared them with some popular alternatives. From Table \ref{tab_ablation}, we can draw several observations: (1) The resampling strategy plays a crucial role in our method as it maximizes the discrepancy of the trends in different classes of examples, particularly in the Credit Fraud dataset. It serves as an important factor in amplifying the model's early success, which is the foundation of our further approach towards achieving better performance. (2) Our proposed \textbf{trend score} provides a better evaluation metric than the statistic $\Tilde{S}$ used in the standardized Mann-Kendall test, and the simplified \textbf{trend score} also shows competitive performance. (3) Fisher Natural Break Partition derives deterministic optimal partitions with better statistical properties and empirical performance compared to heuristic k-means. Moreover, it is unrelated to initialization and less time-consuming than the original version, as detailed in Appendix\ref{Natural Breaks Classification}.

\subsubsection{Sensitivity Analysis.}  In this subsection, we investigate the impact of two hyperparameters, namely the scaling parameter $\alpha$ and the stopping iteration (we do not need to manually tune it), on the evaluation of predictive trends for each example. To facilitate comparisons, we set $\alpha$ to $2$ and employ the LZO algorithm \cite{leave} discussed in Section \ref{early stopping} for selecting the stopping epoch in our experiments involving mixed labeled data. As depicted in Figure \ref{sensitivity}, our approach consistently delivers robust outcomes across diverse hyperparameter values. Moreover, the model tends to perform better when $\alpha>1$ and demonstrates basically consistent performance. Figure \ref{sensitivity} confirms the effectiveness of the LZO strategy which is free of manual intervention in the stopping epoch.
\vspace*{-8pt}
\section{Related Works}
\vspace*{-8pt}
For a long time, learning with limited supervision has been a striking task in the machine learning community and PUL is an emerging paradigm of weakly supervised learning \cite{zhouweak, gong2020centroid}. Despite its close relations with some similar concepts, the term PUL is generally accepted from \cite{liupu, denispu,PN}. Currently, the mainstream PUL methods cast this problem as a cost-sensitive classification task through importance reweighting, among which uPU \cite{duupu} is the widely known one. Later, the authors of nnPU \cite{kiryonnpu} suggest that uPU gets overfitting when using flexible and complex models such as Deep Neural Networks and thus propose a non-negative risk estimator. Some recent studies attempt to combine the cost-sensitive method with model's capability to calibrate and distill the labeled set with various techniques like denoise \cite{noisypu}, self-paced curriculum \cite{selfpu} and heuristic mix up \cite{liyour, mixpul}. 

Parallel with the cost-sensitive methods, another branch of PUL methods adopts a heuristic two-step method. The early trials of two-step methods mainly focus on the sample-selection task to form a reliable negative set and further yield the semi-supervised learning framework \cite{twostepbase, outlierdetection, twosteprelabeling, vpu,pgan}. Other two-step methods are mainly derived from the large margin principle to correct the bias caused by unreliable negative data such as Loss Decomposition \cite{shicentroid}, Large margin based calibration and label disambiguation \cite{largecarlibration, largedisambiguation}. Plus, different techniques have been employed to assign labels for unlabeled data in PUL like Graph-based models \cite{graphone,graphtwo}, GAN \cite{genpu, pgan} and Reinforcement learning \cite{pulns} in recent years. Plus, decision tree based PU methods are also investigated in \cite{wiltonpositive}.

Most PUL methods are oriented from a SCAR (selected completely at random) assumption or established on a given class prior. In this respect, there emerges some class prior estimation algorithms specially designed for PUL. PE attempts to minimize the Pearson divergence between the labeled and unlabeled distribution, PEN-L1 \cite{PENL} and MPE \cite{mpe} are then proposed to modify PE by using a simple Best Bin Estimation (BBE) technique. Unfortunately, most class prior estimation algorithms still rely on specific assumptions and the estimates will be unreliable otherwise\cite{menon2015learning}. Regarding the possibility of selection bias in the labeling process, the SCAR assumption is relaxed in \cite{kato2019learning}. VAE-PU is the first generative PUL model without a supposed labeling mechanism like SCAR assumption \cite{vaepu} and further investigated in \cite{asymmetricpu}. For more details about PUL, readers are referred to a recent survey for a comprehensive understanding of this subject \cite{bekker}. 
\vspace*{-8pt}
\section{Conclusion}
\vspace*{-8pt}
This study introduces a novel method for Positive-Unlabeled Learning (PUL) that takes a fresh perspective by identifying the unique characteristics of each example's predictive trend. Our approach is based on two key observations: Firstly, resampling positive examples to create a balanced training distribution can achieve comparable or even superior performance to existing state-of-the-art methods in the early stages of training. Secondly, the predicting scores of negative examples tend to exhibit a consistent decrease, while those of positive examples may initially increase before ultimately decreasing or oscillating. These insights lead us to reframe the central challenge of PUL as a task of discerning the trend of the model predicting scores. We also propose a novel labeling approach that uses statistical methods to identify significant partitions, circumventing the need for manual intervention in determining confidence thresholds or selecting ratios. Extensive empirical studies demonstrate the effectiveness of our method and its potential to contribute to related fields, such as learning from noisy labels and semi-supervised learning.

\section{Acknowledgments and Disclosure of Funding}
This work was supported by the Natural Science Foundation of China (NSFC)  (Grant No.62376126), the National Key R\&D Program of China (2022ZD0114801), National Natural Science Foundation of China (61906089), Natural Science Foundation of China (NSFC) (Grant No.62106102),  Natural Science Foundation of Jiangsu Province (BK20210292), Graduate Research and Practical Innovation Program at Nanjing University of Aeronautics and Astronautics (xcxjh20221601).
\newpage
\bibliography{neurips_2023}

\newpage
\appendix
\newtheorem{lemma}{Lemma}[section]
\section{Analysis for Resampling Method}
\label{resampling model}
\begin{figure}[ht]
    \centering
    \includegraphics[scale=0.38]{./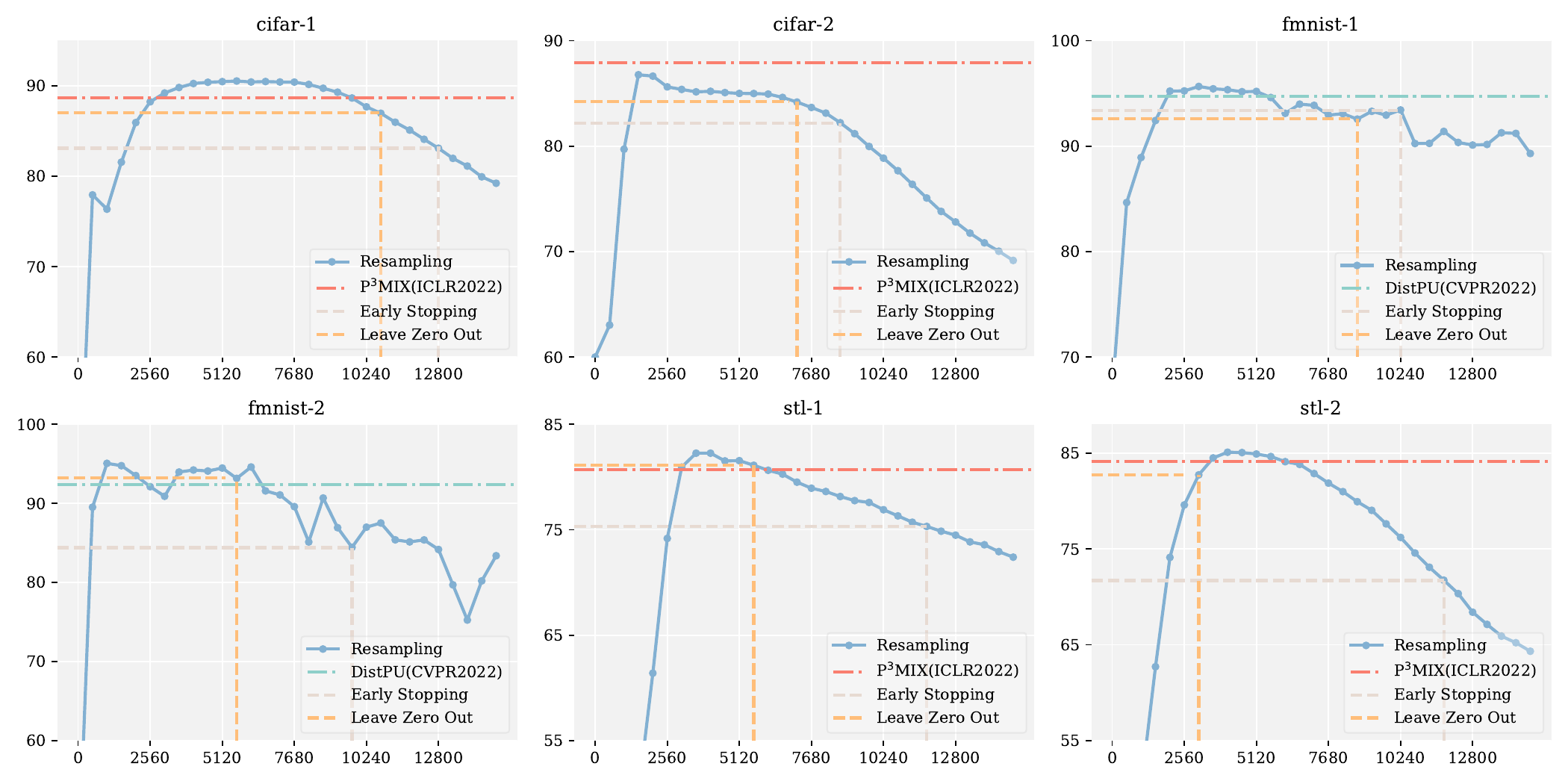} 
    \caption{The accuracy of our resampling method on various settings across all 3 generic datasets. The horizontal line represents the accuracy of the state-of-the-art methods.}
    \label{test_acc}
\end{figure}
\textbf{Empirical Results.} Specifically, we employ the negativity assumption and resample positive data to achieve a balanced training distribution. Despite its simplicity, such a resampling approach achieves great empirical success as shown in Figure\ref{test_acc}, as it highlights the value of precious labels and mitigates the negative impact brought by false negatives and imbalanced label distribution. The outcomes suggest that the early predictive ability of the model could potentially facilitate our efforts in classification tasks. However, determining the optimal epoch to stop training and select the best model still remains a challenging task in PUL due to the absence of a precise validation set. For early stopping, we follow the settings in \cite{mixpul} and hold out 500 positive examples as a validation set. For LZO, we use an augmented validation set based on mix-up techniques following\cite{leave}.

\newtheorem{assumption}{Assumption}[section]
\begin{assumption}
\label{ass1} 
We consider a naive situation where positive and negative data are drawn from a mixture of two Gaussians in $\mathbb{R}^p$ respectively and the dataset consists of $n$ i.i.d. samples from the following distributions:
\begin{equation}
\begin{aligned}
    &\mathbb{P}(x|y=0)\sim \mathcal{N}(+v, \sigma^2I_{p\times p}),\\
    &\mathbb{P}(x|y=1)\sim \mathcal{N}(-v, \sigma^2I_{p\times p}).
\end{aligned}
\end{equation}
where $v$ is an arbitrary unit vector in $\mathbb{R}^p$ and $\sigma^2$ is a small constant. Please keep in mind that the clusters are two spheres with radii $\sigma \sqrt{p}>>2$ when $n,p\to \infty$ which makes this classification nontrivial. This binary classifier is trained by simply discriminating between positive and unlabeled data (i.i.d. sampled from the true distribution). 
\begin{equation}
     \mathbb{P}(x_u)\sim \pi \mathbb{P}(x|y=0) +(1-\pi)\mathbb{P}(x|y=1).
\end{equation}
\end{assumption}

\subsection{Bayesian Decision Hyperplane}
\label{decision hyperplane}

\newtheorem{proposition}{Proposition}[section]
\begin{proposition}
Under Assumption \ref{ass1}, the Bayesian optimal decision hyperplane $h_{pu}$ derived from the model using resampling strategy under a PU setting is equivalent to the Bayesian optimal decision hyperplane $h_{pn}^*$ under a balanced PN binary classification setting.
\begin{equation}
    h_{pu} = h_{pn}^*.
\end{equation}
\end{proposition}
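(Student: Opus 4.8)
The plan is to express both Bayes-optimal rules as likelihood-ratio tests and show they induce the same separating hyperplane. First I would pin down the effect of the resampling scheme: since it enforces $|\mathcal{B}_p|=|\mathcal{B}_u|$, the two training categories---genuinely positive examples ($y=0$) versus unlabeled examples relabeled as negative---enter with equal effective prior $\tfrac12$. The Bayes-optimal classifier $h_{pu}$ therefore assigns $x$ to the positive category exactly when $\tfrac12\,\mathbb{P}(x\mid y=0) > \tfrac12\,\mathbb{P}(x_u)$, i.e.\ when $\mathbb{P}(x\mid y=0) > \mathbb{P}(x_u)$.

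Next I would substitute the mixture form $\mathbb{P}(x_u)=\pi\,\mathbb{P}(x\mid y=0)+(1-\pi)\,\mathbb{P}(x\mid y=1)$. The positive component appears on both sides, so moving $\pi\,\mathbb{P}(x\mid y=0)$ to the left and dividing by $1-\pi>0$ collapses the rule to $\mathbb{P}(x\mid y=0)>\mathbb{P}(x\mid y=1)$---which is exactly the Bayes-optimal rule for the balanced PN problem with equal priors $\tfrac12$. Its boundary $h_{pn}^*$ is the level set $\{\mathbb{P}(x\mid y=0)=\mathbb{P}(x\mid y=1)\}$. Under Assumption \ref{ass1} the two Gaussians share the covariance $\sigma^2 I_{p\times p}$, so the quadratic terms in the log-likelihood ratio cancel and this common boundary reduces to the linear equation $x^\top v=0$, the hyperplane through the origin orthogonal to $v$. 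Equality of the two rules then yields $h_{pu}=h_{pn}^*$.

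The one point needing genuine care---rather than routine algebra---is isolating \emph{why} the positive component cancels, which is precisely where the balancing constraint is indispensable. I would make this explicit by redoing the derivation with a general prior $\alpha$ on the positive category: the boundary becomes $[\alpha-(1-\alpha)\pi]\,\mathbb{P}(x\mid y=0)=(1-\alpha)(1-\pi)\,\mathbb{P}(x\mid y=1)$, and the Gaussian log-ratio then places the hyperplane at $x^\top v=\tfrac{\sigma^2}{2}\log\frac{(1-\alpha)(1-\pi)}{\alpha-(1-\alpha)\pi}$. This offset vanishes if and only if the two coefficients coincide, i.e.\ iff $\alpha=\tfrac12$. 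Thus $h_{pu}=h_{pn}^*$ is not generic but is the specific payoff of enforcing $|\mathcal{B}_p|=|\mathcal{B}_u|$, which I would highlight as the conceptual core of the statement; the remaining Gaussian computation (log-ratio linear in $x^\top v$) is standard.
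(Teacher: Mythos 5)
Your proof is correct and follows essentially the same route as the paper's: both are Bayes likelihood-ratio computations that substitute the mixture form $\mathbb{P}(x_u)=\pi\,\mathbb{P}(x|y=0)+(1-\pi)\,\mathbb{P}(x|y=1)$ and use the shared-covariance Gaussian log-ratio $2v^\top x/\sigma^2$ to exhibit the hyperplane. The one difference worth noting is organizational: you cancel the positive component of the mixture \emph{before} invoking the Gaussian assumption, so your reduction of the balanced PU rule to the balanced PN rule $\mathbb{P}(x|y=0)>\mathbb{P}(x|y=1)$ is distribution-free, with the Gaussians entering only to identify the common boundary as $v^\top x=0$. The paper instead carries the Gaussian formulas throughout, deriving the general boundary $2v^\top x+\sigma^2\bigl(\ln(|\mathcal{P}|/|\mathcal{U}|-\pi)-\ln(1-\pi)\bigr)=0$ and then setting $|\mathcal{P}|/|\mathcal{U}|=1$; your general-$\alpha$ supplement is exactly this computation in disguise (identify $\alpha/(1-\alpha)$ with $|\mathcal{P}|/|\mathcal{U}|$), so you also recover the paper's observation that the offset vanishes iff the effective priors are balanced, and implicitly its caveat that the boundary is only well defined when $\alpha-(1-\alpha)\pi>0$. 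Net effect: the two arguments prove the same statement with the same ingredients, but your ordering makes explicit that the rule equivalence itself does not depend on Gaussianity — only the linear form of the boundary does.
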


\begin{proof}
We first discuss the decision hyperplane when both positive and negative data are available. By the virtue of Bayes' theorem, the score function  $g_{pn}$ and decision hyperplane $h_{pn}$ separating each category at the same probability should be formulated as:
\begin{equation}
    \begin{aligned}
         g_{pn}(x)&=g_{p}(x)-g_{n}(x)\\
                  &=ln[\mathbb{P}(x_p)\mathbb{P}(y=0)]-ln[\mathbb{P}(x_n)\mathbb{P}(y=1)]\\
                  &=ln\frac{\mathbb{P}(x|y=0)}{\mathbb{P}(x|y=1)} + ln\frac{\pi}{1-\pi}\\
                  &=ln\frac{ N(+v, \sigma^2I_{p\times p})}{ N(-v, \sigma^2I_{p\times p})} + ln\frac{\pi}{1-\pi}\\
                  &=\frac{2v^tx}{\sigma^2}+ ln\frac{\pi}{1-\pi}.
    \end{aligned}
\end{equation}
\vspace{-0.05cm}
\begin{equation}
    g_{pn}(x)=0\Rightarrow h_{pn}:2v^tx+\sigma^2ln\frac{\pi}{1-\pi}=0.
    \label{pn}
\end{equation}

There exists an ideal decision hyperplane $h^{*}_{pn}$ when the positive and negative data is balanced distributed ( $\pi=1-\pi=0.5$). 
\begin{equation}
    h^{*}_{pn}(x):2v^tx=0.
    \label{ideal}
\end{equation}
When the distribution of negative data is unknown to us, we simply take the negativity assumption to make the classification by differentiating unlabeled data and positive data. Thus, the score function  $g_{pu}$ and decision hyperplane $h_{pu}$ can be formulated as:
\begin{equation}
    \begin{aligned}
        g_{pu}(x)   &=g_{p}(x)-g_{u}(x)\\
                    &=ln[\mathbb{P}(x_p)\mathbb{P}(l)]-ln[\mathbb{P}(x_u)\mathbb{P}(u)]\\
                    &=ln\frac{\mathbb{P}(x|y=0)}{\pi \mathbb{P}(x|y=0) +(1-\pi)\mathbb{P}(x|y=1)} + ln\frac{\mathbb{P}(l)}{\mathbb{P}(u)}\\
                    &=ln\ N(+v, \sigma^2I_{p\times p})+ ln\frac{|\mathcal{P}|}{|\mathcal{U}|}-ln[\pi N(+v, \sigma^2I_{p\times p})+(1-\pi)N(-v, \sigma^2I_{p\times p}) \\
                    &=-ln[(1-\pi)exp({\frac{-2v^tx}{\sigma^2}})+\pi] + ln\frac{|\mathcal{P}|}{|\mathcal{U}|}.
    \end{aligned}
\end{equation}
\vspace{-0.05cm}
\begin{equation}
    g_{pu}(x)=0 \Rightarrow h_{pu}:2v^tx+\sigma^2(ln(\frac{|\mathcal{P}|}{|\mathcal{U}|}-\pi)-ln(1-\pi))=0.
    \label{pu}
\end{equation}
When adopting a resampling strategy, the $|\mathcal{P}|/|\mathcal{U}|$ is set to 1, $h_{pu} = h_{pn}^*$.
\end{proof}
We can also observe that when $\pi=0$, Eq.\ref{pu} degrades to Eq.\ref{pn}, which corresponds to the special case where the unlabeled set consists only of negative examples. However, it should be noted that in most cases, $|\mathcal{P}|/|\mathcal{U}|$ is less than $\pi$, making the decision hyperplane unlearnable. This underscores that label noise and data imbalance, introduced by the negativity assumption, are two key reasons for model degradation during the latter training phase. Therefore, we can consider $|\mathcal{P}|/|\mathcal{U}|$ as a flexible coefficient that controls the relative importance of data belonging to different classes. When we adopt a resampling strategy like our baseline, we aim to set this coefficient to 1, enabling us to derive an optimal decision hyperplane as shown in Eq.\ref{ideal}.

\subsection{Early Learning Phenomenon in PU Setting}
To better illustrate model's early success when adoping the resampling strategy, we reformalize the theorem of Early Learning phenomenon given by \cite{early} in a linear model and verify that this phenomenon also exists in PUL when taking cross entropy(CE) loss as the loss function. We first show that, for the first T iterations, the negative gradient has a constant correlation with $v$. (Note that, by contrast, a random vector in $\mathbb{R}^p$ typically has a negligible correlation with $v$.) Afterward, the false pseudo labels given by negativity assumption are memorized asymptotically.

\begin{lemma}
    Under Assumption \ref{ass1}, denote by $\{S_t\}$ the iterates of gradient descent with step size $\eta$. For any $c \in(0,1)$, there exists a constant $\sigma_c$ such that, if $\sigma\le \sigma_c$ and $p/n \in (1-c/2,1)$, then with probability $1-o(1)$ as $n$, $p\to \infty$ there exists a $T=\Omega(1/\eta)$ such that:
    \begin{itemize}
        \item \textbf{Early learning succeeds:} For $t<T$, $-\nabla \mathcal{L}_{CE}(S_t)$ is well correlated with the correct separator $v$, and at $t=T$ the classifier has higher accuracy on the wrongly labeled examples than at initialization.
    \end{itemize}
    \begin{itemize}
        \item \textbf{Memorization occurs:} As $t \to \infty$, the classifier $S_t$ memorizes all noisy labels.
    \end{itemize}
\end{lemma}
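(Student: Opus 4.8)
The plan is to reduce the statement to a one-dimensional analysis of the gradient-descent dynamics along the separator direction $v$, following the template of \cite{early} but exploiting the special \emph{one-sided} noise structure of the PU setting. Writing the linear score as $\langle S_t, x\rangle$ and taking $\ell$ to be the logistic loss, the negative CE gradient is a sum over the training examples of $(y_i-\sigma(\langle S_t,x_i\rangle))\,x_i$, reweighted by the $1/|\mathcal{B}_p|$ and $1/|\mathcal{B}_u|$ factors of Eq.\ref{loss}. First I would partition the examples into the \textbf{clean} group — the labeled positives carrying the true label $0$, together with the true negatives inside $\mathcal{U}$ that are correctly assigned $1$ — and the \textbf{wrongly-labeled} group, namely the fraction $\pi$ of $\mathcal{U}$ that is genuinely positive but assigned $1$. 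Using that each feature concentrates around $\pm v$ with fluctuations of size $\sigma\sqrt p$, I decompose $-\nabla\mathcal{L}_{CE}(S_t)$ into a component along $v$ and an orthogonal ``noise'' component.

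For the early-learning claim I would initialize at $S_0=0$, where every $\sigma(\langle S_0,x_i\rangle)=\tfrac12$, so each example contributes $\pm\tfrac12 x_i$ to the gradient. Because the clean group dominates in mass (only the fraction $\pi$ of $\mathcal{U}$ is mislabeled, the labeled positives and true negatives being correct), the expected negative gradient acquires a component along $v$ whose inner product $\langle -\nabla\mathcal{L}_{CE}(S_0),v\rangle$ is bounded away from zero; geometrically this tilts the decision boundary of $S_t$ toward the Bayes-optimal hyperplane $h^*_{pn}$ of Eq.\ref{ideal}. The hypotheses $\sigma\le\sigma_c$ and $p/n\in(1-c/2,1)$ enter precisely here, letting me invoke Gaussian concentration so that, with probability $1-o(1)$ as $n,p\to\infty$, the component of the gradient orthogonal to $v$ stays uniformly small relative to its $v$-component. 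I would then propagate this estimate over iterations by induction, showing that for $t<T$ the iterate remains in a cone around $v$, that $\langle -\nabla\mathcal{L}_{CE}(S_t),v\rangle$ stays bounded below by a constant, and hence that the projection $\langle S_t,v\rangle$ grows linearly, which forces $T=\Omega(1/\eta)$. Since this motion aligns $S_t$ with the correct separator, the predicted label of the mislabeled positives moves toward their true class, so the classifier's accuracy on them at $t=T$ strictly exceeds its accuracy at initialization.

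For the memorization claim I would argue that in the regime $p/n$ close to $1$ the assigned (noisy) labeling is linearly separable with probability $1-o(1)$, so the logistic loss has no finite minimizer and gradient descent drives $\|S_t\|\to\infty$ along the direction that separates \emph{all} assigned labels, including the mislabeled positives; invoking the standard implicit-bias convergence of gradient descent on separable logistic loss to the max-margin separator of the given labels then shows that every training point, the noisy ones included, is eventually classified according to its assigned label.

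The hardest step is the early-learning window: I must establish a lower bound on $\langle -\nabla\mathcal{L}_{CE}(S_t),v\rangle$ that holds \emph{uniformly} over $0\le t<T$, not merely at initialization, which requires controlling the feedback whereby the growing orthogonal component and the drifting sigmoid outputs reshape the gradient. The one-sided PU noise helps here — all wrongly-labeled gradients pull coherently along a single direction rather than in random directions, so their net effect is a scalar drag of size $\propto\pi$ that I can compare directly against the clean signal — but the bias term and the sharp concentration needed in the near-interpolation regime $p\approx n$ are where the technical care concentrates.
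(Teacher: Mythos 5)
Your proposal is correct and takes essentially the same route as the paper: the paper's own ``proof'' simply observes that the PU negativity assumption amounts to one-sided label noise at a rate governed by the prior $\pi$ (affecting only the constants $c$ and $\sigma_c$) and defers the details to the early-learning analysis of \cite{early}, which is precisely the argument you sketch --- the clean/mislabeled gradient decomposition with correlation to $v$, Gaussian concentration plus induction over the window $t<T=\Omega(1/\eta)$, and memorization via separability and the implicit bias of gradient descent on separable logistic loss. The only difference is that you unpack the internals of the cited theorem (correctly noting that the one-sided noise acts as a coherent drag of size proportional to $\pi$) rather than invoking it as a black box.
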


The only specialness of PUL setting is that the ratio of noise is given by negativity assumption controlled by the positive prior $\pi$. However, this only affects the constant $c$ and corresponding $\sigma_c$. Readers are referred to \cite{early} for detailed proof.

Combining the above empirical results and theoretical explanation, we better understand the capacity of resampling methods in the early stage of training.

\subsection{Threshold Selection}
\begin{figure}[ht]
  \centering
  \begin{subfigure}[b]{0.49\textwidth}
    \centering
    \includegraphics[width=\textwidth]{./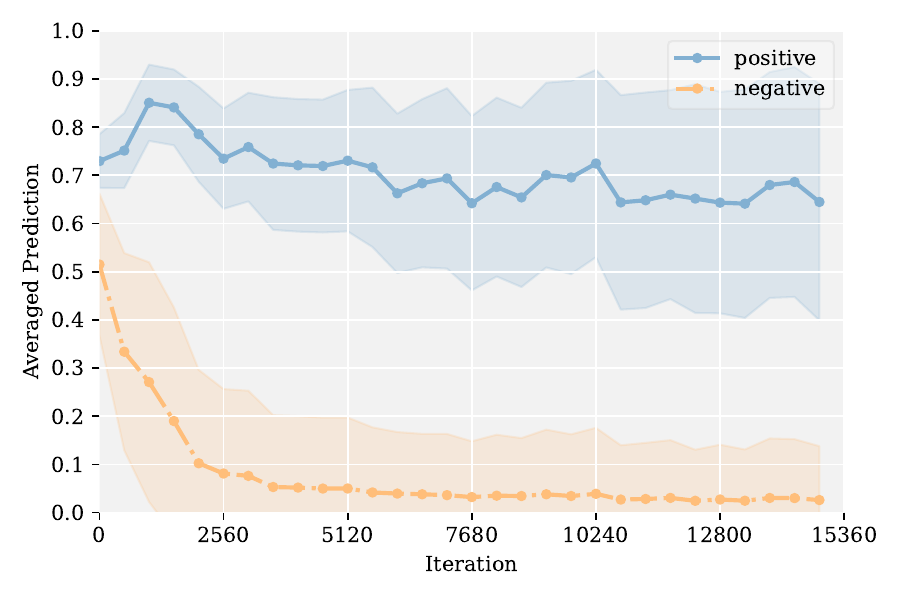}
    \label{uncertain_fmnist1}
  \end{subfigure}
  \hfill
  \begin{subfigure}[b]{0.49\textwidth}
    \centering
    \includegraphics[width=\textwidth]{./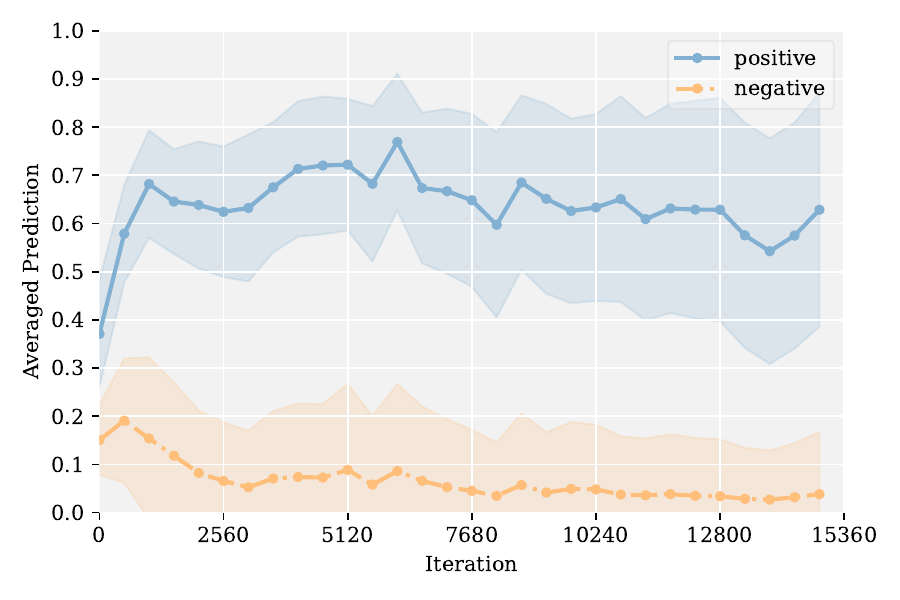}
    \label{uncertain_fmnist2}
  \end{subfigure}
  \hfill
  \begin{subfigure}[b]{0.49\textwidth}
    \centering
    \includegraphics[width=\textwidth]{./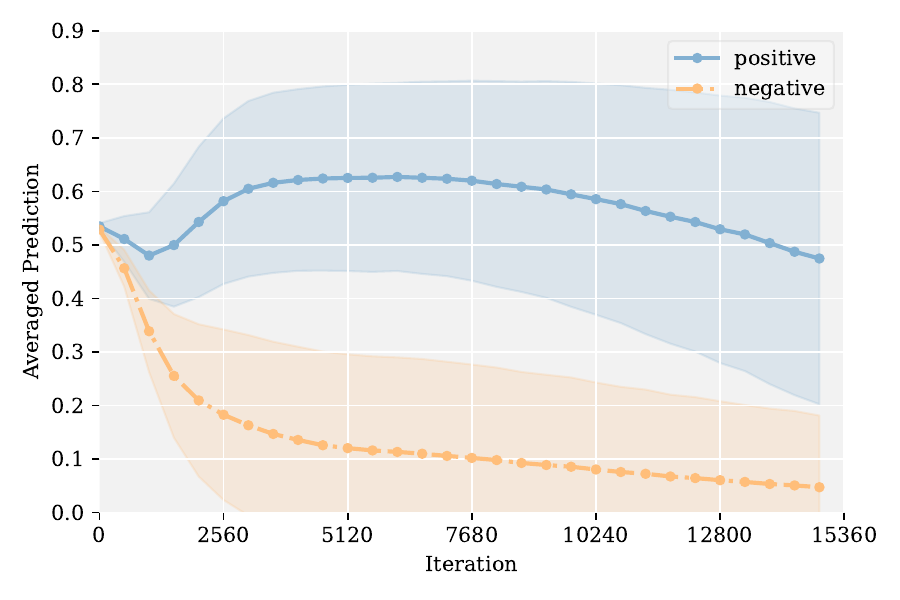}
    \label{uncertain_cifar1_appendix}
  \end{subfigure}
  \hfill
  \begin{subfigure}[b]{0.49\textwidth}
    \centering
    \includegraphics[width=\textwidth]{./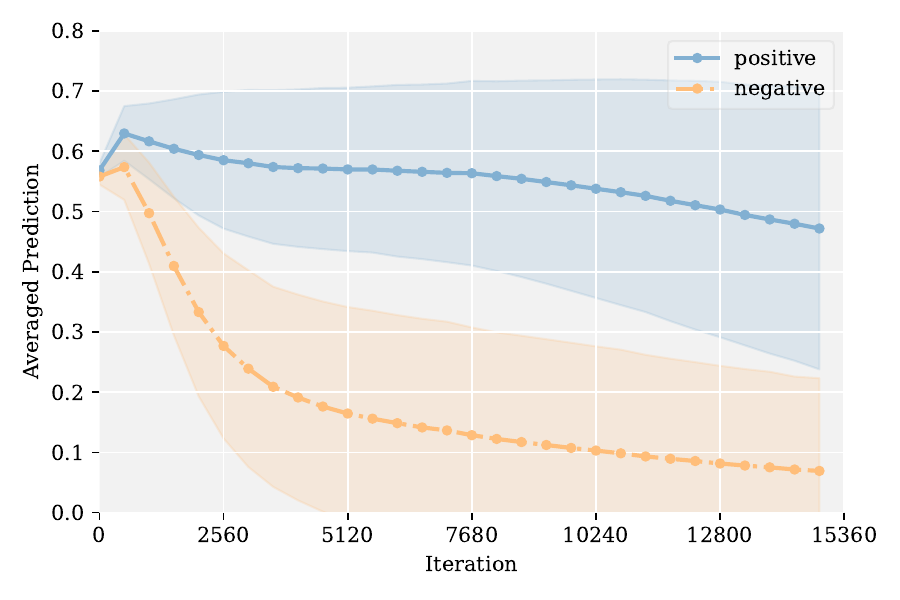}
    \label{uncertain_cifar2_appendix}
  \end{subfigure}
  \hfill
\caption{Averaged prediction confidence with a standard deviation of positive and negative examples on FMNIST1 (upper left), FMNIST2 (upper right), CIFAR10-1 (lower left) and CIFAR10-2 (lower right). }
\label{uncertain_appendix}
\end{figure}

In this section, we present additional predictions and standard deviations obtained from four different settings utilizing CIFAR10 and FMNIST datasets. Notably, as illustrated in Figure \ref{uncertain_appendix}, mislabeling errors of positive examples in the unlabeled set as negatives tend to increase with continued training when the threshold is set at 0.5. These results underscore the importance and challenge of accurately distinguishing between positive and negative examples in PUL tasks. Moreover, our findings indicate that differences between positive and negative examples are reflected in both the predictive trends and magnitudes of model-predicted scores. It also can be seen that, as the training progresses, the interval for an appropriate threshold shrinks.

\section{Mann-Kendall Test}
\label{MK test}
The Mann-Kendall test is a non-parametric test used to determine if a time series has a trend over time. The test calculates the Mann-Kendall statistic $S$ and the variance $Var(S)$. The test is performed by calculating:

\begin{equation}
S = \sum_{i=1}^{n-1} \sum_{j=i+1}^{n} sign(x_j - x_i).
\end{equation}

where $x$ is the time series data, n is the number of observations, and $sign()$ is the sign function that returns $-1$ if its argument is negative, $0$ if its argument is zero, and $1$ if its argument is positive. The variance of $S$ is calculated as:

\begin{equation}
Var(S) = \frac{n(n-1)(2n+5)-\sum_{p=1}^{g} t_p(t_p-1)(2t_p+5)}{18}.
\end{equation}

where $g$ is the number of tied groups, $t_p$ is the number of tied values in the $p$th group. If the absolute value of $S$ is greater than the critical value $(\alpha/2)$ times the standard error of $SE(S)$, where $\alpha$ is the significance level, then the null hypothesis of no trend is rejected. The standard error of $S$ is calculated as:

\begin{equation}
    Z_{MK}=
    \begin{cases}
        \frac{S-1}{\sqrt{VAR(S)}}, S>0 \\
        \frac{S}{\sqrt{VAR(S)}}, S=0\\
        \frac{S+1}{\sqrt{VAR(S)}}, S<0\\
    \end{cases}
\end{equation}

To compute the significance of the Mann-Kendall test, we compare the absolute value of the Mann-Kendall statistic ($Z_{MK}$) to the critical value ($Z_{1-\alpha/2}$). The critical value depends on the level of significance ($\alpha$) chosen and can be obtained from statistical tables or calculated using the software. If $|Z_{MK}| > Z_{1-\alpha/2}$, then the null hypothesis of no trend is rejected and we conclude that there is a significant trend present in the data.

The $\gamma$-value can also be calculated to determine the level of significance of the test. The $\gamma$-value is the probability of observing a Mann-Kendall statistic as extreme or more extreme than the observed value under the null hypothesis of no trend. If the $\gamma$-value is less than the chosen level of significance ($\alpha$), then we reject the null hypothesis and conclude that there is a significant trend (either increasing or decreasing) in the data. If $\gamma$ is bigger than $\alpha$, we conclude there is no trend in this time series data.

To compute the $\gamma$-value, we first calculate the standardized test statistic ($Z$). Then, we calculate the probability of observing a $Z$ value as extreme or more extreme than the observed value using a normal distribution table or software. The $\gamma$-value can be obtained by using the z-table.

\section{Proof of Theorem}
\label{proof}

\label{lemma1}
\begin{lemma}
    \textbf{$C_r$-inequlity:} For any $a, b\in \mathbb{R}$ and $p > 0$, we have:
    \begin{equation}
        {|a+b|}^p \leq max\{2^{p-1},1\}(|a|^p+|b|^p),
    \end{equation}
    and if $p > 1$, it is easy to verify:
    \begin{equation}
        {|a+b|}^p \leq 2^{p-1}(|a|^p+|b|^p).
    \end{equation}
\end{lemma}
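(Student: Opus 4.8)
The plan is to reduce the two-variable inequality to a single-variable statement by normalizing, and then to split into the two regimes $p\geq 1$ and $0<p<1$ according to which argument the maximum selects. The starting observation is that the triangle inequality gives $|a+b|\leq |a|+|b|$, and since $t\mapsto t^p$ is nondecreasing on $[0,\infty)$ for every $p>0$, applying it preserves the inequality, so that $|a+b|^p\leq(|a|+|b|)^p$. This eliminates the signs entirely, and it therefore suffices to prove $(x+y)^p\leq \max\{2^{p-1},1\}(x^p+y^p)$ for all $x,y\geq 0$. The degenerate case $x=y=0$ is trivial, so I may assume $x+y>0$, divide through by $(x+y)^p$, and write $s=x/(x+y)\in[0,1]$; the claim then becomes $s^p+(1-s)^p\geq (\max\{2^{p-1},1\})^{-1}$ for all $s\in[0,1]$.

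For the case $p\geq 1$, where $\max\{2^{p-1},1\}=2^{p-1}$, I would invoke convexity of the map $t\mapsto t^p$ on $[0,\infty)$, which holds precisely because $p\geq 1$, and apply Jensen's inequality to the two points $x,y$ with equal weights. This yields $\left(\tfrac{x+y}{2}\right)^p\leq \tfrac{1}{2}(x^p+y^p)$, which rearranges directly to $(x+y)^p\leq 2^{p-1}(x^p+y^p)$. This is exactly the refined bound asserted for $p>1$, and since it also covers $p=1$, the second, "easy to verify" clause of the statement follows immediately.

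For the case $0<p<1$, where $\max\{2^{p-1},1\}=1$, the target reduces to subadditivity of the power function, namely $(x+y)^p\leq x^p+y^p$. In the normalized form this is $s^p+(1-s)^p\geq 1$ for $s\in[0,1]$, which I would establish from the elementary pointwise bound $t^p\geq t$ valid for $t\in[0,1]$ whenever $0<p<1$: applying it to $t=s$ and to $t=1-s$ and summing gives $s^p+(1-s)^p\geq s+(1-s)=1$. The pointwise bound $t^p\geq t$ is itself immediate, since $t^{p-1}\geq 1$ for $t\in(0,1]$ when $p-1<0$.

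Combining the two cases gives the stated inequality for all $p>0$. There is no genuine obstacle here, as the result is classical; the only point requiring minor care is matching the active branch of the maximum to the correct tool — convexity for $p\geq 1$ and subadditivity for $0<p<1$ — and checking that the boundary value $p=1$ is consistent with both the general bound and its refinement.
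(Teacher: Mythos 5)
Your proof is correct and complete. Note, however, that the paper itself offers no proof of this lemma: it is stated as a reminder of the classical $C_r$-inequality and invoked as a known fact in the derivation of the exponential moment inequalities for the concentration bound on $\hat{S}$, so there is no paper argument to compare against --- your write-up supplies a proof the paper omits. Each step checks out: the reduction via the triangle inequality and monotonicity of $t\mapsto t^p$ on $[0,\infty)$ is valid for all $p>0$; for $p\geq 1$, Jensen applied to the convex map $t\mapsto t^p$ at the two points $x,y$ with equal weights gives $\left(\frac{x+y}{2}\right)^p\leq\frac{1}{2}(x^p+y^p)$, which is exactly the refined bound (and subsumes the $p>1$ clause, with $p=1$ consistent since $2^{p-1}=1$ there); for $0<p<1$, subadditivity follows from $t^p\geq t$ on $[0,1]$ exactly as you argue. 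One small stylistic remark: the normalization $s=x/(x+y)$ is only doing work in the $0<p<1$ branch --- your Jensen argument for $p\geq 1$ bypasses it entirely --- so you could state the normalization inside that case rather than up front, but this is cosmetic and does not affect correctness.
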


Before giving detailed proof, we first rewrite it as a reminder:

\begin{theorem}
Let $P=\{p_{ij}|1\leq i\leq t-1, 2\leq j \leq t,i<j\}$ be an observation set of changes in predictions in which $\Tilde{S}$ is the statistic in the standardized Mann-Kendall test and $\sigma^2$ is the variance of $P$. By exploiting the non-decreasing influence function $\psi(x)$, for any $\epsilon>0$, we have the following bound with probability at least $1-2\epsilon$:
\begin{equation}
        |\alpha\Tilde{S}-\hat{S}|<\frac{2\alpha\sigma\sqrt{\frac{2log(\epsilon^{-1})}{t(t-1)}}}{1-\sqrt{\frac{2log(\epsilon^{-1})}{t(t-1)\alpha^2\sigma^2}}}=\textit{O}\big(({log(\epsilon^{-1}}))^\frac{1}{2}t^{-1}\big).
\end{equation}
\end{theorem}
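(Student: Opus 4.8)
The plan is to follow Catoni's robust mean estimation machinery, treating the set $P=\{\Delta p_{ij}\}$ as a sample of size $N=t(t-1)/2$ and viewing $\hat{S}=\frac{1}{N}\sum_{i<j}\psi(\alpha\Delta p_{ij})$ as a soft-truncated empirical average of $\alpha\mathbb{E}[\Delta p]$ (writing $m=\mathbb{E}[\Delta p]$ for brevity; since $\Tilde{S}$ is the unbiased empirical mean, bounding the deviation against $\alpha m$ and against $\alpha\Tilde{S}$ are interchangeable up to the concentration of $\Tilde{S}$ itself, so the two displayed forms of the theorem follow from one argument). The engine is the log-bracketing of the influence function, which I would establish first: for $\psi(x)=\mathrm{sign}(x)\log(1+|x|+x^2/2)$ one has $\exp(\psi(x))\le 1+x+x^2/2$ and $\exp(-\psi(x))\le 1-x+x^2/2$ for every $x$. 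The upper inequality is an identity for $x\ge 0$, and the two-sided claim reduces to $(1+x^2/2)^2\ge 1+x^2$, i.e. $x^4/4\ge 0$; the $C_r$-inequality (Lemma \ref{lemma1}) is what I would invoke to control cross terms when expanding the product of moment-generating factors.

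Next I would run an exponential Markov (Chernoff) argument. Using the bracketing together with $1+u\le e^{u}$, the per-pair bound reads $\mathbb{E}[\exp(\psi(\alpha\Delta p_{ij}))]\le\exp(\alpha m+\tfrac{\alpha^2}{2}(\sigma^2+m^2))$. Multiplying over the $N$ pairs and applying Markov's inequality to $\exp(\sum_{i<j}\psi(\alpha\Delta p_{ij}))$ yields, with probability at least $1-\epsilon$, an upper tail of the form $\hat{S}\le \alpha m+\tfrac{\alpha^2}{2}(\sigma^2+m^2)+\tfrac{\log(\epsilon^{-1})}{N}$. Running the identical argument with $-\psi$ produces the matching lower tail, and intersecting the two events gives two-sided control with probability at least $1-2\epsilon$, which is exactly where the $2\epsilon$ in the statement originates.

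Finally, I would convert this additive Chernoff inequality into the stated closed form. Substituting $N=t(t-1)/2$ so that $\tfrac{\log(\epsilon^{-1})}{N}=\tfrac{2\log(\epsilon^{-1})}{t(t-1)}$ and setting $B=\sqrt{2\log(\epsilon^{-1})/(t(t-1))}$, the bound is quadratic in the deviation; solving that quadratic for $|\hat{S}-\alpha m|$ isolates the fraction $\tfrac{2\alpha\sigma B}{1-B/(\alpha\sigma)}$, with the denominator being positive precisely under the non-degeneracy condition $N\alpha^2\sigma^2>\log(\epsilon^{-1})$. The asymptotic rate then follows from $B=\Theta\big((\log\epsilon^{-1})^{1/2}t^{-1}\big)$ and the denominator tending to $1$, giving the advertised $O\big((\log\epsilon^{-1})^{1/2}t^{-1}\big)$.

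The main obstacle I anticipate is twofold. First, the pairs $\Delta p_{ij}=p_j-p_i$ are \emph{not} independent, since each $p_i$ enters many pairs, so the factorization of the moment-generating function in the Chernoff step is not literally valid; I would need either to exploit the $U$-statistic structure (via a Hoeffding decomposition into independent blocks) or, as the paper implicitly does, to argue that treating $P$ as the observation set is an adequate approximation. Second, $\hat{S}$ is a plain $\psi$-average rather than a genuine Catoni $M$-estimator solving $\sum\psi(\alpha(\Delta p_{ij}-\theta))=0$, so I must carefully track the second-moment bias term $\tfrac{\alpha^2}{2}\sigma^2$ and verify it is absorbed correctly when solving the quadratic, so as to land exactly on the stated fraction rather than on a merely additive bound.
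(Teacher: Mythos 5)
Your toolkit (the bracketing $e^{\psi(x)}\le 1+x+x^2/2$, exponential moments, Markov, and two one-sided events yielding $1-2\epsilon$) is the same as the paper's, but the object you apply it to is not, and that difference is fatal to your route. Writing $m=\mathbb{E}[\Delta p]$ as you do, your Chernoff step on the plain average $\hat{S}=\frac{1}{N}\sum_{i<j}\psi(\alpha\Delta p_{ij})$ gives the per-pair bound $\mathbb{E}\big[e^{\psi(\alpha\Delta p_{ij})}\big]\le 1+\alpha m+\frac{\alpha^2}{2}(\sigma^2+m^2)$, and hence, after Markov and intersection, $|\hat{S}-\alpha m|\le \frac{\alpha^2}{2}(\sigma^2+m^2)+\frac{\log(\epsilon^{-1})}{N}$. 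In this inequality the deviation appears only on the left-hand side: since the candidate value never enters the argument of $\psi$, nothing here is ``quadratic in the deviation,'' and there is no quadratic to solve. Worse, the bias term $\frac{\alpha^2}{2}(\sigma^2+m^2)$ does not shrink with $t$ (the paper treats $\alpha$ as a fixed hyperparameter, set to $2$ in the experiments), so your bound saturates at order $\alpha^2\sigma^2$ and can never produce the advertised $O\big((\log\epsilon^{-1})^{1/2}t^{-1}\big)$ rate. You correctly flagged this as your second obstacle, but the proposal contains no mechanism that resolves it; as planned, it terminates at exactly the ``merely additive bound'' you were worried about, which is not the theorem.

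The paper escapes this trap by not analyzing the plain $\psi$-average at all. Following Catoni and Chen et al., it introduces an auxiliary scale $\beta>0$ and the M-estimator $\hat{S}_\beta$ defined implicitly by $\sum_{i<j}\psi\big(\beta(\alpha\Delta p_{ij}-\hat{S}_\beta)\big)=0$, together with $r(S)=\frac{2}{\beta t(t-1)}\sum_{i<j}\psi\big(\beta(\alpha\Delta p_{ij}-S)\big)$. Because the candidate $S$ sits inside $\psi$, the exponential-moment/Markov step produces envelopes $B_\pm(S)$ that are genuinely quadratic in $\alpha\Tilde{S}-S$; monotonicity of $\psi$ (hence of $r$) localizes the root $\hat{S}_\beta$ between the roots of $B_\pm$, and the choice $\beta=\sqrt{2\log(\epsilon^{-1})/(t(t-1))}\,/(\alpha\sigma)$ --- a truncation scale that \emph{decays with $t$} --- makes the bias term vanish at precisely the rate needed to yield the stated fraction with its $1-\sqrt{2\log(\epsilon^{-1})/(t(t-1)\alpha^2\sigma^2)}$ denominator. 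That is the missing idea in your plan: either you analyze the implicit estimator (accepting that this quietly differs from the trend score of Eq.~\eqref{eq_trendscore}, a mismatch the paper itself glosses over), or you must let the scale inside $\psi$ shrink with $t$; with fixed $\alpha$ and a plain average, the stated bound is unobtainable. Your first obstacle (dependence among the pairs $\Delta p_{ij}$), on the other hand, is real but does not distinguish the two routes: the paper's factorization of the moment-generating function across pairs presumes independence just as yours would, and it is left unaddressed there as well.
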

\begin{proof}
    We first specify some notions here for simplicity :
    \begin{equation}
        \alpha\Tilde{S}=\frac{1}{t(t-1)} \sum_{i=1}^{t-1}\sum_{j=i+1}^{t}\alpha\Delta p_{ij},\hspace{0.1cm}\Delta p_{ij}=p_j-p_i,\hspace{0.1cm}\alpha>0.
    \end{equation}
    \begin{equation}
        S=\frac{2}{t(t-1)}\sum_{i=1}^{t-1}\sum_{j=i+1}^{t}\psi(\alpha\Delta p_{ij}),\hspace{0.1cm}\Delta p_{ij}=p_j-p_i,\hspace{0.1cm}\alpha>0.
    \end{equation}
    \vspace{0.1cm}
    \begin{equation}
        \psi(x)=sign(x)\cdot log(1+|x|+x^2/2).
    \end{equation}
    
     As suggested in \cite{catoni2012challenging}, we can assume the upper and lower bounds of the proposed \textbf{trend score} $S$ as $S^-$ and $S^+$:
    \begin{equation}
        S^-\leq S \leq S^+.
    \end{equation}
    Besides, although $\psi$ is not derivative of some explicit error function, we will use it in the same way and consider it as an influence function. For some positive real parameter $\beta$, we will build our estimator $\hat{S}_\beta$ as the solution of the following equation:
    \begin{equation}
        \sum_{i=1}^{t-1}\sum_{j=i+1}^{t} \psi[\beta(\alpha \Delta p_{ij}-\hat{S}_\beta)]=0.
    \end{equation}
    In fact, we choose the widest possible choice of the M estimator to derive a relatively stabilized empirical mean by making the smallest possible change that is closest to the empirical mean. Then we introduce the quantity and the exponential moment inequalities, from which deviation bounds will follow:
    \begin{equation}
        r(S)=\frac{2}{\beta t(t-1)}\sum_{i=1}^{t-1}\sum_{j=i+1}^{t} \psi[\beta(\alpha\Delta p_{ij}-S)],\hspace{0.1cm} S \in \mathbb{R}.
    \end{equation}
    Simply following the assumptions and \textbf{Proposition 2.1} in \cite{catoni2012challenging}, we can derive the following exponential moment inequalities through \textbf{Lemma\ref{lemma1}}:
    \begin{equation}
    \label{eqemi1}
        \begin{aligned}
            \mathbb{E}\big[e^{\frac{\beta t(t-1)r(S)}{2}}\big]
            &=\mathbb{E}[e^{\sum_{i=1}^{t-1}\sum_{j=i+1}^{t} \psi\big(\beta(\alpha\Delta p_{ij}-S)\big)}] \\
            &={\Big(\mathbb{E}\big[e^{\psi\big(\beta(\alpha\Delta p_{ij}-S)\big)}\big]\Big)}^{\frac{t(t-1)}{2}}\\
            &\leq {\Big(\mathbb{E}\big[1+\beta(\alpha\Delta p_{ij}-S)+\frac{\beta^2}{2}{(\alpha\Delta p_{ij}-S)}^2\big]\Big)}^{\frac{t(t-1)}{2}}\\
            &\leq {\Big(1+\beta(\alpha\Tilde{S}-S)+\frac{\beta^2}{2}\mathbb{E}\big[{(\alpha\Delta p_{ij}-S)}^2\big]\Big)}^{\frac{t(t-1)}{2}}\\
            &\leq {\Big(1+\beta(\alpha\Tilde{S}-S)+\beta^2\big(\alpha^2\sigma^2+{(\alpha\Tilde{S}-S)}^2\big)\Big)}^{\frac{t(t-1)}{2}}\\
            &\leq e^{\frac{t(t-1)}{2}\beta(\alpha\Tilde{S}-S)+\frac{t(t-1)}{2}\beta^2\big(\alpha^2\sigma^2+{(\alpha\Tilde{S}-S)}^2\big)}
        \end{aligned}
    \end{equation}
    Similarly, we have:
    \begin{equation}
    \label{eqemi2}
        \mathbb{E}\big[e^{-\frac{\beta t(t-1)r(S)}{2}}\big] \leq e^{-\frac{t(t-1)}{2}\beta(\alpha\Tilde{S}-S)+\frac{t(t-1)}{2}\beta^2\big(\alpha^2\sigma^2+{(\alpha\Tilde{S}-S)}^2\big)}.
    \end{equation}
    According to Eq.\ref{eqemi1} and Eq.\ref{eqemi2}, we have that for any $\epsilon\in(0,1/2)$,  there exists:
    \begin{equation}
    \label{bound1}
        B_+(S) = \alpha\Tilde{S}-S+\beta\big(\alpha^2\sigma^2+{(\alpha\Tilde{S}-S)}^2\big)+\frac{2log(\epsilon^{-1})}{t(t-1)\beta}.
    \end{equation}
    \begin{equation}
    \label{bound2}
        B_-(S) = \alpha\Tilde{S}-S-\beta\big(\alpha^2\sigma^2+{(\alpha\Tilde{S}-S)}^2\big)+\frac{2log(\epsilon^{-1})}{t(t-1)\beta}.
    \end{equation}
    By Markov inequality and Eq.\ref{bound1} and Eq.\ref{bound2}, we have:
    \begin{equation}
    \begin{aligned}
        \mathbb{P}\big(r(S)\geq B_+(S)\big)
        &=\mathbb{P}\big(e^{\frac{\beta t(t-1)r(S)}{2}}\geq e^{\frac{\beta t(t-1)B_+(S)}{2}}\big)\\
        &\leq \frac{\mathbb{E}\big[e^{\frac{\beta t(t-1)r(S)}{2}}\big]}{e^{\frac{t(t-1)}{2}\beta(\alpha\Tilde{S}-S)+\frac{t(t-1)}{2}\beta^2\big(\alpha^2\sigma^2+{(\alpha\Tilde{S}-S)^2\big)}+log(\epsilon^-1)}}\\
        &\leq \frac{e^{\frac{t(t-1)}{2}\beta(\alpha\Tilde{S}-S)+\frac{t(t-1)}{2}\beta^2\big(\alpha^2\sigma^2+{(\alpha\Tilde{S}-S)^2\big)}}}{e^{\frac{t(t-1)}{2}\beta(\alpha\Tilde{S}-S)+\frac{t(t-1)}{2}\beta^2\big(\alpha^2\sigma^2+{(\alpha\Tilde{S}-S)^2\big)}+log(\epsilon^-1)}}=\epsilon.
    \end{aligned}
    \end{equation}
    \vspace{0.1cm}
    Thus, we have:
    \begin{equation}
        \mathbb{P}\big(r(S)\le B_+(S)\big) \geq 1-\epsilon.
    \end{equation}
    Similarly,
    \begin{equation}
        \mathbb{P}\big(r(S)\ge B_-(S)\big) \geq 1-\epsilon.
    \end{equation}
    Thus, we can claim:
    \begin{equation}
    \label{pac1}
        \mathbb{P}\big(B_-(S) \le r(S)\le B_+(S)\big) \geq 1-2\epsilon.
    \end{equation}
    According to \textbf{Lemma 2.3} in \cite{chen2021generalized}, we know that for positive real parameter $\beta$ satisfying:
    \begin{equation}
        0<\beta\leq\frac{\sqrt{\frac{1}{4}-\frac{2log(\epsilon^{-1})}{t(t-1)}}}{\alpha\sigma}.
    \end{equation}
    there exists $S_-$ and $S_+$ that $B_+(S_+)=0$ and $B_-(S_+)=0$, meanwhile, $S_+$ is the smallest solution and $S_-$ is the largest solution. Then, it's easy to derive:
    \begin{equation}
        \mathbb{P}\big(S_-\leq\hat{S}\leq S_+\big) \geq 1-2\epsilon.
    \end{equation}
    since our chosen $\psi(x)$ is a continuous function on $x$ which also means that $r(S)$ is a continuous function on $S$. And we know from Eq.\ref{pac1} when $r(\hat{S})=0$ the following event holds with a probability of at least $1-2\epsilon$:
    \begin{equation}
        S_-\leq\hat{S}\leq S_+.
    \end{equation}
    Following the \textbf{Theorem2.6} in \cite{chen2021generalized}, we denote $\beta=\frac{\sqrt{\frac{2log(\epsilon^{-1})}{t(t-1)}}}{\alpha\sigma}$, $n\geq(2\alpha^2\sigma^2+1)^2log(\epsilon^{-1})/{\alpha^2\sigma^2}$. When the difference between \vspace{0.1cm} $S_-$ and $S_+$ is small we can derive the estimator can be localized in a small interval, which implies:
    \begin{equation}
        |\alpha\Tilde{S}-\hat{S}|<\frac{2\alpha\sigma\sqrt{\frac{2log(\epsilon^{-1})}{t(t-1)}}}{1-\sqrt{\frac{2log(\epsilon^{-1})}{t(t-1)\alpha^2\sigma^2}}}=\textit{O}\big(({log(\epsilon^{-1}}))^\frac{1}{2}t^{-1}\big).
    \end{equation}
    holds with a probability of at least $1-2\epsilon$.
\end{proof}

\begin{figure}[ht]
    \centering
    \includegraphics[scale=0.8]{./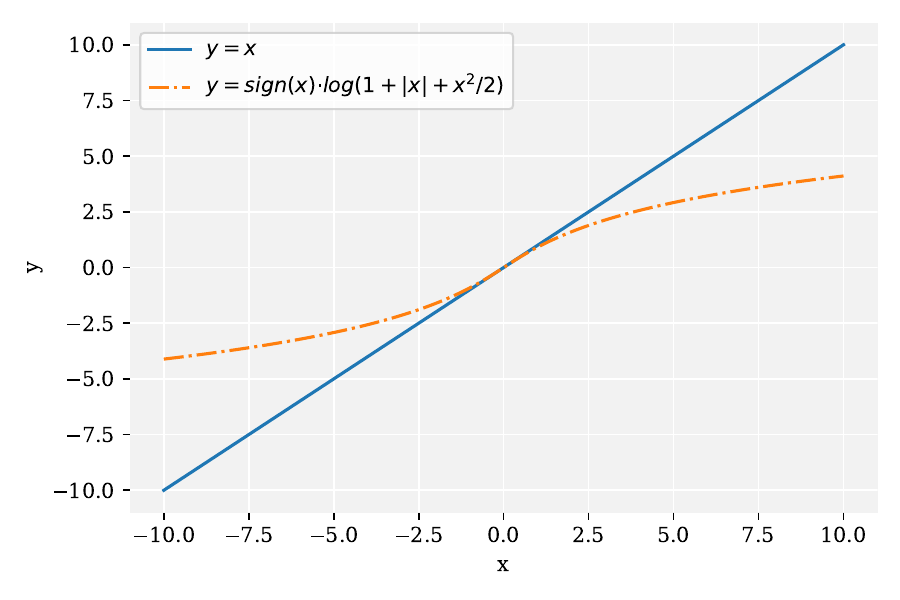} 
    \caption{ The illustration of our proposed robust mean estimator to assess the model's predictive trend.}
    \label{estimator}
\end{figure}
After the theoretical analysis, we present a graph of our robust mean estimator, which sheds light on its underlying mechanism. As illustrated in Figure\ref{estimator}, the estimator is less sensitive to outliers and deviations from normality when the input value $x$ is too large or too small, as indicated by the flatter curve of $f(x)$ in its head and tail. Furthermore, the scaling parameter $\alpha$ enhances the flexibility of the estimator in handling extreme scenarios.

\vspace{-10pt}

\section{Fisher-Jenks Natural Break Classification}
\label{Natural Breaks Classification}

\begin{algorithm}[htb] 
\caption{Fisher (Jenks) Natural Break by Dynamic Programing}
\label{alg_fisher}
\textbf{Input:} Sequence of \textbf{trend score} values $x_i$ for $i\in{1,...,N}$ \\
\textbf{Output:} Class-break index $b$
\begin{algorithmic}
    \STATE \textbf{sort} $\mathcal{X}=\{x_i, 1\leq i\leq N\}$ to a strictly increasing sequence.
    \STATE $\sigma^{2+}_1 \gets 0$; $\Bar{X}_1^+ \gets x_1$; $\sigma^{2-}_N- \gets 0$; $\Bar{X}_1^- \gets x_n$; $b \gets 0$; $s \gets \infty$
    \FOR{$n=2$ to $N$}
        \STATE $\Bar{X}_{n}^+ = \frac{1}{n}x_n+\frac{n-1}{n}\Bar{X}^+_{n-1}$
        \STATE $\sigma^{2+}_n = \frac{n-2}{n-1}\sigma^{2+}_{n-1} + \frac{1}{n}(\Bar{X}_{n}^+-\Bar{X}_{n-1}^+)^2$
    \ENDFOR
    \FOR{$n=N-1$ to $1$}
        \STATE $\Bar{X}_{n}^- = \frac{1}{n}x_n+\frac{n-1}{n}\Bar{X}_{n-1}^-$
        \STATE $\sigma^{2-}_n = \frac{n-2}{n-1}\sigma^{2-}_{n-1} + \frac{1}{n}(\Bar{X}_{n}^--\Bar{X}_{n-1}^-)^2$
    \ENDFOR
    \FOR{$n=1$ to $N-1$}
        \IF{$\sigma^{2-}_{n+1} + \sigma^{2+}_n < s$}
            \STATE $s=\sigma^{2-}_{n+1} + \sigma^{2+}_n; b=n$
        \ENDIF
    \ENDFOR
    \STATE \textbf{return} {$b$};
\end{algorithmic}
\end{algorithm}

In this section, we provide a specific training procedure for finding the Fisher Jenks Natural Break point in a binary scenario. As outlined in Algorithm\ref{alg_fisher}, the sorting process is simple and can be implemented using any sorting algorithm with a worst-case time complexity of $\textit{O}\big(Nlog(N)\big)$. Then, we use a recursive approach to compute the mean and variance of the sequence in both ascending and descending orders. This enables us to obtain a chart for the sum of variances for every possible split, with a time complexity of $\textit{O}\big(N\big)$. Therefore, the overall time complexity remains $\textit{O}\big(Nlog(N)\big)$. Compared with the original algorithm of finding the Fisher Natural Break Point that asks for a time complexity of  $\textit{O}\big(N^2\big)$. Afterward, we provide a detailed derivation of our recursive method for computing the mean and variance. We take the variance $\sigma^{2+}_n$ in ascending order as an example:
\begin{equation}
    \Bar{X}_{n} = \frac{1}{n}x_n+\frac{n-1}{n}\Bar{X}_{n-1}
\end{equation}
\begin{equation}
    \sigma^2_n = \frac{1}{n-1} \sum^n_{i=1}(x_i-\Bar{X}_n)^2 = \frac{1}{n-1} \sum^n_{i=1}\big[(x_i-\Bar{X}_{n-1}) + (\Bar{X}_{n-1}-\Bar{X}_n)\big]^2.
\end{equation}
where $\Bar{X}_n$ is the averaged value of the first $n$ values in the sequence. Then, we can have:
\begin{equation}
    \begin{aligned}
        (n-1)\sigma^2_n
        &=\sum^n_{i=1}\big[(x_i-\Bar{X}_{n-1})^2 + (\Bar{X}_{n-1}-\Bar{X}_n)^2 + 2(x_i-\Bar{X}_{n-1})(\Bar{X}_{n-1}-\Bar{X}_n)\big]\\
        &=\sum^n_{i=1}(x_i-\Bar{X}_{n-1})^2 + \sum^n_{i=1}(\Bar{X}_{n-1}-\Bar{X}_n)^2 + 2\sum^n_{i=1}(x_i-\Bar{X}_{n-1})(\Bar{X}_{n-1}-\Bar{X}_n)\\
        &=\sum^{n-1}_{i=1}(x_i-\Bar{X}_{n-1})^2 + (x_n-\Bar{X}_{n-1})^2 + n(\Bar{X}_{n-1}-\Bar{X}_n)^2+\\ 
        &2(\Bar{X}_{n-1}-\Bar{X}_n)\sum^n_{i=1}(x_i-\Bar{X}_{n-1})\\
        &=(n-2)\sigma^2_{n-1} + (x_n-\Bar{X}_{n-1})^2 + n(\Bar{X}_{n-1}-\Bar{X}_n)^2+\\
        &2(\Bar{X}_{n-1}-\Bar{X}_n) \big[\sum^{n-1}_{i=1}(x_i-\Bar{X}_{n-1}) + (x_n-\Bar{X}_{n-1})\big]\\
        &=(n-2)\sigma^2_{n-1} + (x_n-\Bar{X}_{n-1})^2 + n(\Bar{X}_{n-1}-\Bar{X}_n)^2+\\
        &2(\Bar{X}_{n-1}-\Bar{X}_n)(x_n-\Bar{X}_{n-1})\\
        &=(n-2)\sigma^2_{n-1} + (x_n-\Bar{X}_{n-1})^2 + n(\Bar{X}_{n-1}-\Bar{X}_n)^2-2n(\Bar{X}_{n-1}-\Bar{X}_{n})^2\\
        &=(n-2)\sigma^2_{n-1} + (n^2-n)(\Bar{X}_{n-1}-\Bar{X}_{n})^2\\
        &=(n-2)\sigma^2_{n-1}+\frac{n-1}{n}(x_n-\Bar{X}_{n-1})^2.
    \end{aligned}
\end{equation}
Similarly, the variance $\sigma^{2-}_n$ in descending order can be calculated in a similar way. Then, it's natural for us to have a chart for the sum of variances for every possible split from which the Fisher Natural break point is available.

\section{Additional Experiments}
Here we discuss additional results in other practical settings and further demonstrate the robustness of our method. As mentioned in Section\ref{early stopping}, the model witnesses a dramatic performance degradation when positive data occupies a majority of the unlabeled set or the SCAR (selected completely at random) assumption is violated but such data scenarios are widespread in real-world applications. Moreover, we also make some brief comparisons with other methods under more complex backbones with a varying number of positive labels. 

\begin{table}[htbp] 
\caption{Results of classification accuracy ($\%$) on CIFAR10-1 wiht varying number of postive labels under different backbones (ResNet18 and CNN7 as the backbone model).}
\centering
\setlength\tabcolsep{16pt}
{
    \resizebox{0.9\textwidth}{!}
    {
        \begin{tabular}{cccccc}
        \toprule[1.5pt]
            Backbone                    & Algorithm  & $n_p$ = 0.5k     & $n_p$ = 1k        & $n_p$ = 3k    & $n_p$ = 10k    \\ \midrule[1pt]
        \multirow{6}{*}{CNN7}           & Resampling & 86.29            & 90.02             & 92.64         & 93.41 \\
                                        & uPU        & 82.49           & 76.52             & 87.34         & 93.02 \\
                                        & nnPU       & 85.11            & 84.77             & 89.42         & 94.45 \\
                                        & vPU        & 83.05            & 86.74             & 90.54         & \textbf{95.99} \\ 
                                        & Dist-PU    & 85.15            & 87.25             & 91.76         & 95.07 \\ 
                                        & Ours & \textbf{87.21}   & \textbf{90.58}    &\textbf{91.80} & 95.94\\ \midrule[1pt]
        \multirow{6}{*}{ResNet18}       & Resampling & 84.27            & 88.32             & 90.21         & 93.88 \\
                                        & uPU        & 84.78            & 86.94             & 89.72         & 92.75 \\
                                        & nnPU       & 86.05            & 89.43             & 90.01         & 91.84 \\
                                        & vPU        & 71.40            & 86.85             & 88.54         & 89.89 \\ 
                                        & Dist-PU    & 92.15            & 92.94             & 93.47         & \textbf{96.77} \\ 
                                        & Ours & \textbf{93.21}   & \textbf{94.58}    &\textbf{95.77} & 96.44\\ \bottomrule[1.5pt]
        \end{tabular}
    }
}
\vspace{0.1cm}

\label{tab_labelnum&backbone}
\end{table}
Based on Table\ref{tab_labelnum&backbone}, the Trend-based PU framework performs better in scenarios where the number of positive labels is limited. This could be attributed to the fact that when there are 10,000 positive labels available, the estimation bias and prediction errors caused by the negative assumption are reduced due to the ample availability of supervised information. For imbalanced data, we give different imbalanced divisions compared with ImbalancedPU \cite{imbpu} by following the practice of long-tailed recognition. 10 different categories of CIFAR-10 are distributed under an exponential function with imbalance ratios $\gamma$ in $\{10,100,1000\}$ (the ratio of most populated class to least populated) and we follow the division above in Appendix\ref{implementaion detail} to form the positive and negative set respectively. Thus, the positive prior $\pi$ also gets fixed when the head class is determined as positive or negative. Compared with the division in ImbalancedPU that only choose one category as a positive class, our proposed one is more practical and challenging since it is common practice for a positive class to have different classes with an imbalanced number of data. Besides, in this case, the labeled data and positive data in the unlabeled set share different distributions which do not align with the common SCAR assumption. While our method also gets challenged when negative examples is rare, it still presents much better performance. Actually, when we look into this problem that the majority of unlabeled data is positive or negative. It even makes PUL two completely different questions, 

\begin{table}[htbp]
\caption{Results of classification accuracy(ACC), AUC and F1 score ($\%$) on test set with same number of labels ($1000$) but varying positive prior. }
\centering
\resizebox{1\textwidth}{!}
{
    \begin{tabular}{ccccccccccccc}
    \toprule[1.5pt]
    \multirow{2}{*}{Method} & \multicolumn{3}{c}{$\pi=0.124$, $\gamma=1000$} & \multicolumn{3}{c}{$\pi=0.712$, $\gamma=10$} & \multicolumn{3}{c}{$\pi=0.888$, $\gamma=100$} & \multicolumn{3}{c}{$\pi=0.960$, $\gamma=1000$}  \\ \cmidrule{2-13} 
                            & ACC     & AUC    & F1     & ACC     & AUC    & F1     & ACC     & AUC    & F1     & ACC     & AUC    & F1     \\ \midrule[1pt]
    Resampling             & 92.05   & 96.41  & 91.45  & 74.13   & 82.32  & 42.10   & 70.40   & 79.45  & 35.31  & 67.24    & 71.90  & 14.11   \vspace{0.1cm} \\ 
    ImbPU           & \textbf{92.61}   & 97.12  & 92.51    & 83.22   & \textbf{93.15}  & 86.11  & 74.12   & 84.58  & 77.25  & 71.27   & 80.31  &65.47  \\ \midrule[1pt]
    Ours              & 92.52   & 96.60  & \textbf{92.80}   & \textbf{83.57}   & 90.84  & \textbf{86.85}  & \textbf{80.01}   & \textbf{90.02}  & \textbf{84.68}  & \textbf{75.35}   & \textbf{88.51}  & \textbf{80.72}    \\ \bottomrule[1.5pt]
    \end{tabular}
}
\vspace{0.1cm}

\label{tab_imbalance}
\end{table}
 We compare our method with the resampling baseline and ImbalancedPU specially designed for imbalanced distributions based on popular nnPU and uPU. The results of accuracy, AUC and F1 score on the test set are given in Table \ref{tab_imbalance}. We denote the $\pi$ as the positive prior of the whole dataset including the labeled data. It has illustrated that traditional cost-sensitive based methods can make competitive performance when the data distribution is balanced or positive class is rare. However, it witnesses a significant descent on all three metrics when the majority of unlabeled data belongs to the positive class and we argue that such a situation is quite common especially in the case when positive data is easy to obtain.

\begin{algorithm}[tb]
\caption{Training procedure of the proposed method}
\label{alg1}
\textbf{Input}: positive set $\mathcal{P}$, unlabeled set $\mathcal{U}$\\ 
\textbf{Parameter}: scaling parameter $\alpha$, evaluation step $q$\\
\textbf{Output}:  model parameters $\Theta$
\begin{algorithmic}[1] 
\STATE \textbf{Initialize $\Theta$}, $t=0$ and translate the unlabeled set $\mathcal{U}$ into negative set by negativity assumption;
\WHILE{$t$ $\le$ $MaxEpoch$}
    \STATE Shuffle $\mathcal{P} \cup \mathcal{U}$ into $I$ mini-batches and denote the $i$-th mini-batch as $(\mathcal{B}_p^i,\mathcal{B}_u^i)$;
    \FOR{$i=1$ to $q$}
        \STATE Compute the loss via Eq.\ref{loss}
        \STATE update model parameters $\Theta$ with Adam;
    \ENDFOR
    \STATE Record the model's predictions on the unlabeled set $\mathcal{D}_t=\{p_1, p_2, \ldots, p_{|\mathcal{U}|}\}$
\ENDWHILE
\FOR{$i=1$ to $|\mathcal{U}|$}
    \STATE calculate the \textbf{trend score} $s_i$ on $\mathcal{D}$ through Eq.\ref{eq_trendscore} or Eq.\ref{eq_simplified_trendscore}.
\ENDFOR
\STATE Split the the unlabeled set $\mathcal{U}$ by Algorithm\ref{alg_fisher} to get reformalized positive set $\mathcal{P}$ and negative set $\mathcal{N}$
\STATE \textbf{Reinitialize $\Theta$} and train a binary model on the new positive set $\mathcal{P}$ and negative set $\mathcal{N}$
\STATE \textbf{return} model parameters $\Theta$
\end{algorithmic}
\end{algorithm}

\section{Implementation details}
\label{implementaion detail}

The detailed description of these benchmark datasets is given in Table \ref{tab_dataset} and we denote the category labels with integers ranging from 0 to 9 following the default settings in torchvision. For each dataset, we split the dataset into two disjoint sets as positive and negative following the protocol of \cite{vpu}. Specifically, the labels are defined as follows:  F-MNIST-1: “0,2,4,7” vs “1,5,6,8,9”, F-MNIST-2: “1,5,6,8,9” vs “0,2,4,7”; CIFAR-10-1: “0,1,8,9” vs “2,3,4,5,6,7”, CIFAR-10-2: “2,3,4,5,6,7” vs“0,1,8,9”; STL-10-1: “0,2,3,8,9” vs “1,4,5,6,7”, STL-10-2: “1,4,5,6,7” vs “0,2,3,8,9”; Credit Fraud: "Fraud" vs "Non-Fraud; Alzheimer: "Demented" vs "Non-Demented". 

For a fair comparison, we generally follow the experimental settings as \cite{mixpul,zhaodist}. Specifically, we use the same data split as \cite{mixpul} in CIFAR-10-1, CIFAR-10-2, STL-10-1, STL-10-1 and Credit Card. For Alzheimer, F-MNIST-1 and F-MNIST-2, we follow the settings of \cite{zhaodist}. To verify the effectiveness of our proposed method, We compare our method with several competitive PUL algorithms including uPU\cite{duupu}, nnPU\cite{kiryonnpu}, RP\cite{northcutt2017learning} nnPU with the mixup regularization term, Self-PU\cite{selfpu}, PUSB\cite{kato2019learning}, PUbN\cite{hsieh2019classification}, aPU\cite{hammoudeh2020learning}, vPU\cite{vpu}, MIXPUL\cite{mixpul}, PAN\cite{pgan}, PULNS \cite{pulns}, Dist-PU\cite{zhaodist} and P$^3$MIX \cite{liyour}. For the methods requiring the positive prior, we provide them with an accurate prior except for STL since the true positive prior for STL is actually unknown considering it contains "real" unlabeled data. To this end, we estimate the positive prior of STL by KM2 method\cite{km} before evaluating these methods. We report the results of these datasets under the backbones detailed in Table\ref{tab_dataset} which is identical with \cite{mixpul}. It is worth mentioning that the true labels of unlabeled data in STL10 are not available and that's the reason why we do not report any evaluation of the classification on the unlabeled training data in STL10. We run our method five times, following the procedure of \cite{mixpul}, and report the average metrics and their standard deviations. 

Furthermore, for the results presented in Table \ref{tab_transductive}, we evaluate the key metrics of existing PUL methods based on their predictions on the unlabeled set, which can be considered as a transductive experimental setting. Specifically, we report the recall rate for the Credit Card dataset and the accuracy for the remaining datasets. For Table \ref{tab_prior estimate}, we compare the estimated priors of our method with those of other state-of-the-art prior estimation methods. Although our method is not designed for prior estimation, the positive prior is naturally available when the classification of unlabeled data is performed.

In most cases, we perceive accuracy as the most important evaluation metric except for Credit Fraud dataset. In fraud detection, recall is often more important than precision or accuracy because the consequences of missing a fraudulent transaction can be much more severe than flagging a legitimate transaction as fraudulent. False negatives, which are fraudulent transactions that go undetected, can result in significant financial losses for both the individual and the company. On the other hand, false positives, which are legitimate transactions flagged as fraudulent, may cause temporary inconvenience but can usually be resolved through additional verification steps. Therefore, we emphasize more on recall rate and F1 score on the Credit Fraud dataset. 

While existing Positive and Unlabeled Learning (PUL) methods mainly adopt an inductive learning paradigm, we have observed that some literature fails to report the hyperparameter tuning and model selection process. In traditional machine learning, researchers typically perform these tasks on an independent validation set, but this strategy may not be feasible in PUL due to the lack of negative data. While we can still use an extra positive set as a validation set, in real-world scenarios, the number of labeled data may be limited, especially for PUL paradigms. Furthermore, estimates made under such settings may be conservatively biased due to the limited number of data, particularly for small-scale validation sets. Instead of holding out data, we propose to perform model selection on an augmented validation set using mix-up techniques. Our approach yields comparable results to using an auxiliary positive validation set, as demonstrated in Table \ref{tab_transductive} and Table \ref{tab_resample}. In our comparison, we follow the settings in \cite{mixpul} and hold out 500 positive examples as a validation set. However, we use an augmented validation set based on mix-up techniques and the original labeled training set available to choose the stopping iteration to form our \textbf{trend score}.

For detailed experimental settings, we set the batch size to 64 and the evaluation step to 512 for all datasets and settings. The learning rate is set to 0.0015 for CIFAR10-1, CIFAR10-2, and STL10-1, 0.001 for STL10-2, and 0.002 for Credit Card and Alzheimer datasets. All experiments are implemented on RTX2080ti and RTX3080ti.

\section{Future Works}
\subsection{Risk Bound for PUL under SAR Assumption}
In this subsection, we first review the upper and lower risk bound for PUL under the more general SAR assumption derived from \cite{coudray2023risk}. Compared with the SCAR assumption that assumes the probability for a positive instance to be labeled
is constant and thus independent from the covariates, a more general case is to assume the existence of a propensity function $e(x)$:
\begin{equation}
    e(x)=\mathbb{P}(S=1|Y=1,X=x).
\end{equation}
where $S=1$ represents the labeled positive data. Besides, they also assume that the difficulty of the binary classification can be reflected by the \textit{Massart margin} $h$ derived from the regression function $\eta(x)=\mathbb{P}(Y=1|X=x)$:
\begin{equation}
    \exists h>0,\forall x\in \mathbb{R}^d, |2\eta(x)-1|\geq h.
\end{equation}

\begin{lemma}
\label{upper bound}
    Let $\hat{g}$ be a minimizer of the unbiased empirical risk for PUL under the SAR assumption:$\hat{g}\in Argmin_{g\in\mathcal{G}}\hat{R}_n^{SAR}(g)$. Suppose that the \textit{separability} and Massart margin hold, the propensity $e(.)$ is greater than $e_m>0$. Then, we have the following upper bound on the excess risk:
    \begin{equation}
        \mathbb{E}[\ell(\hat{g},g^*)]\leq k_1\Big[min\Big(\frac{V}{ne_mh}\big(1+log(max(1,\frac{nh^2}{V})\big),\sqrt{\frac{V}{ne_m}}\Big)\Big].
    \end{equation}
    where $k_1>0$ is an absolute constant and $V$ is the \textit{Vapnik-Chervonenkis dimension} of $\mathcal{G}$\cite{vapnik1999nature}.
\end{lemma}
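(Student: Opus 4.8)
The plan is to follow the classical localized empirical-process route to fast excess-risk rates under a hard-margin condition, adapted to the inverse-propensity reweighting that defines the PUL-under-SAR risk. First I would record the structural facts that make $\hat R_n^{SAR}$ tractable. The estimator corrects selection bias by reweighting the observed labeled positives with $1/e(x)$; since the \emph{separability} assumption places the Bayes rule $g^*$ inside $\mathcal G$ (so there is no approximation error) and since $e(\cdot)\ge e_m$, each reweighted excess loss is bounded by $1/e_m$, and $\hat R_n^{SAR}$ is unbiased for the true risk $R$. This boundedness is precisely what lets Bernstein- and Talagrand-type concentration go through, with $e_m$ entering as the statistical price of the reweighting.

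Second, I would derive the basic inequality from the optimality of $\hat g$. Writing $\delta_g=\ell_g-\ell_{g^*}$ for the excess-loss function and letting $P$ and $P_n$ denote the true and reweighted empirical risk functionals, the optimality $P_n\delta_{\hat g}\le 0$ gives $P\delta_{\hat g}\le (P-P_n)\delta_{\hat g}$. The task thus reduces to controlling the empirical process $(P-P_n)\delta_g$ uniformly over $g\in\mathcal G$, but localized to classifiers whose excess risk is small.

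Third, and this is the heart of the argument, I would exploit the Massart margin. Under $|2\eta(x)-1|\ge h$ the excess risk dominates the disagreement probability, $P\delta_g=\mathbb E\big[|2\eta(X)-1|\,\mathbb{1}\{g(X)\ne g^*(X)\}\big]\ge h\,\mathbb P(g(X)\ne g^*(X))$, while the reweighted excess loss satisfies $\mathrm{Var}(\delta_g)\le \tfrac1{e_m}\mathbb P(g(X)\ne g^*(X))\le \tfrac{1}{e_m h}\,P\delta_g$. This variance-to-mean inequality is the engine of the fast rate: feeding it into Talagrand's inequality over the localized class $\{g:P\delta_g\le r\}$, and bounding the expected supremum of the empirical process through the VC dimension $V$ by a chaining/Dudley estimate (which produces the $\log(\max(1,nh^2/V))$ factor), yields a fixed-point inequality for the radius $r$ whose solution is the fast-rate term $\tfrac{V}{ne_m h}\big(1+\log(\max(1,nh^2/V))\big)$.

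Finally, I would observe that the slow-rate term $\sqrt{V/(ne_m)}$ holds unconditionally: a direct non-localized uniform deviation bound through the VC dimension, together with $\|\delta_g\|_\infty\le 1/e_m$, controls $(P-P_n)\delta_{\hat g}$ at order $\sqrt{V/(ne_m)}$ regardless of the margin, so taking the smaller of the two estimates yields the stated minimum. The main obstacle I anticipate is the localization step: threading the factors $e_m$ and $h$ through the peeling and fixed-point computation so they land in the sharp positions, and checking that the margin-induced variance control survives the inverse-propensity reweighting without degrading the exponent. Absorbing the residual universal constants into $k_1$ then completes the bound.
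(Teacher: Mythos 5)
There is no in-paper proof to compare against here: this lemma sits in the ``Future Works'' appendix as a verbatim restatement of a result imported from the cited reference \cite{coudray2023risk}, and the paper simply reviews it without argument. Your outline follows the classical Massart--N\'ed\'elec localization route --- the basic inequality $P\delta_{\hat g}\le (P-P_n)\delta_{\hat g}$ from empirical optimality, the margin-induced variance-to-mean control $\mathrm{Var}(\delta_g)\le \frac{1}{e_m h}\,P\delta_g$ (with the propensity lower bound entering through the Bernoulli second moment of the reweighted loss, exactly as you note), Talagrand's inequality over a localized class with a VC/chaining estimate producing the $\log(\max(1,nh^2/V))$ factor, and the unconditional $\sqrt{V/(ne_m)}$ fallback giving the minimum --- and this is precisely the strategy of the cited source, which itself adapts the standard fast-rate machinery for hard-margin classification to inverse-propensity reweighting. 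So your plan identifies the right ingredients and the right places where $e_m$ and $h$ must enter. Be aware, though, that as written it remains a plan: the peeling/fixed-point computation that actually yields $\frac{V}{ne_m h}\bigl(1+\log(\max(1,nh^2/V))\bigr)$, and the check that the $1/e(x)$ weights preserve both the unbiasedness and the variance bound through the localization, are named but not executed, and that is where essentially all of the technical work in the reference lives.
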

\begin{lemma}
\label{lower bound}
    Suppose that $V\leq2$ and $ne_m\geq V$. Let $h'=\sqrt{\frac{V}{ne_m}}$. Keep the assumptions hold in Lemma\ref{upper bound}, $\forall x\in \mathbb{R}^d$, there exists an absolute constant $k_2>0$ such that:\\
    if $h\geq h'$:\\
    \begin{equation}
        \mathcal{R}(\mathcal{G},h)\geq k_2\frac{V-1}{hne_m}.
    \end{equation}
    if $h\leq h'$:\\
    \begin{equation}
        \mathcal{R}(\mathcal{G},h)\geq k_2\sqrt{\frac{V-1}{ne_m}}.
    \end{equation}
\end{lemma}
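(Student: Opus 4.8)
The plan is to establish this minimax lower bound by reducing it to a hypothesis-testing problem over a finite, carefully designed family of hard distributions and then invoking Assouad's lemma. Since $\mathcal{G}$ has Vapnik--Chervonenkis dimension $V$, there exist $V$ points shattered by $\mathcal{G}$; I would reserve one as a reference and use the remaining $m=V-1$ points $x_1,\dots,x_m$ to index a hypercube family $\{P_\tau\}_{\tau\in\{-1,+1\}^m}$. Each $P_\tau$ puts mass $w$ on every $x_k$ (concentrating the leftover mass on an easily classified region so that realizability and separability are respected), sets the regression function to $\eta(x_k)=\tfrac{1}{2}(1+\tau_k\delta)$ so that the Massart margin $|2\eta(x_k)-1|=\delta\geq h$ holds with the Bayes label at $x_k$ dictated by $\tau_k$, and fixes the propensity at its minimal admissible value $e(\cdot)\equiv e_m$ to make estimation as hard as possible. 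Shattering guarantees that for every $\tau$ the Bayes classifier $g^\ast_\tau$ lies in $\mathcal{G}$, so each instance is admissible.

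Next I would move to the PUL/SAR observation model, in which a single datum $(X,S)$ obeys $\mathbb{P}(S=1\mid X=x)=\eta(x)e(x)$; thus at each $x_k$ the observed signal $S$ is Bernoulli with mean $\tfrac{1}{2}(1\pm\delta)e_m$. For $\tau,\tau'$ differing only in coordinate $k$, the divergence localizes at $x_k$: the conditional Bernoulli divergence there, expanded around mean $e_m/2$, is of order $\delta^2 e_m$, so a single sample contributes order $w\,\delta^2 e_m$ and $n$ i.i.d. draws give $\mathrm{KL}(P_\tau^{\otimes n}\,\|\,P_{\tau'}^{\otimes n})\leq C\,n\,w\,\delta^2 e_m$. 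This is exactly where the propensity enters \emph{linearly} and shrinks the effective sample size from $n$ to $n e_m$, which is the heart of the bound.

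With these ingredients, Assouad's lemma yields
\[
\mathcal{R}(\mathcal{G},h)\ \geq\ c\,m\,(w\delta)\Bigl(1-\sqrt{\tfrac{1}{2}\,n\,w\,\delta^2 e_m}\Bigr),
\]
where $w\delta$ is the per-coordinate excess risk of mislabeling a point of mass $w$ and margin $\delta$. It then remains to tune $(w,\delta)$ so that the divergence factor stays bounded below by a positive constant while $m\,w\,\delta$ is maximized, subject to the budget $m\,w\leq 1$ and the margin constraint $\delta\geq h$. Taking $\delta=h$ and $w$ of order $(n e_m h^2)^{-1}$ saturates the divergence constraint in the regime $h\geq h'=\sqrt{V/(ne_m)}$ (where indeed $w\leq 1/(V-1)$), producing $\mathcal{R}(\mathcal{G},h)\geq k_2\,(V-1)/(h\,n\,e_m)$; when $h\leq h'$ the budget becomes binding, so I would instead fix $w$ of order $1/(V-1)$ and \emph{enlarge} the margin to $\delta$ of order $\sqrt{(V-1)/(ne_m)}\geq h$, which produces $\mathcal{R}(\mathcal{G},h)\geq k_2\sqrt{(V-1)/(ne_m)}$.

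I expect the main obstacle to be the joint feasibility of the three structural hypotheses: the hypercube family must simultaneously realize the Massart margin $h$, respect the propensity floor $e_m$ through an admissible choice of $e(\cdot)$, and satisfy the separability assumption of Lemma~\ref{upper bound}, all while remaining shattered so that every $g^\ast_\tau\in\mathcal{G}$. Reconciling the margin, which pushes $\eta$ away from $1/2$, with separability, which constrains the overlap of positive and negative mass, in a manner that keeps all $2^{m}$ labelings achievable is the delicate construction step; the divergence computation and the Assouad bookkeeping are then routine. One should also read the stated hypothesis $V\leq 2$ as $V\geq 2$, so that $m=V-1\geq 1$ and both bounds are non-vacuous.
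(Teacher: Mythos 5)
A preliminary but essential point: the paper itself contains no proof of this lemma. It is stated, together with Lemma~\ref{upper bound}, as background imported from \cite{coudray2023risk} in the Future Works section (``we first review the upper and lower risk bound \dots derived from \cite{coudray2023risk}''), and is used only to discuss limitations of the proposed PUL method. So there is no in-paper argument to compare yours against; the only meaningful benchmark is the cited source, and the route you sketch --- a hypercube of hard distributions supported on a shattered set, propensity frozen at its floor $e_m$, a KL computation that localizes at the flipped coordinate, and Assouad's lemma with $(w,\delta)$ tuned separately in the two regimes --- is the standard Massart--N\'ed\'elec-style construction that lower bounds of this type follow. Your reading of the hypothesis ``$V\leq 2$'' as a typo for $V\geq 2$ is also correct: as printed, the bounds involving $V-1$ would be vacuous or degenerate, and the hypercube needs $V-1\geq 1$ flippable coordinates.

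On the substance of your sketch, the skeleton and the rates check out: with Bernoulli means $\tfrac12(1\pm\delta)e_m$ at the flipped point, the per-pair divergence is at most of order $n\,w\,\delta^{2}e_m$ (the Bernoulli variance is of order $e_m$, which is exactly where the dependence on $e_m$ becomes linear), and your two parameter choices reproduce $(V-1)/(hne_m)$ and $\sqrt{(V-1)/(ne_m)}$. One genuine flaw should be fixed rather than asserted away: in the regime $h\leq h'$ you claim $\delta\asymp\sqrt{(V-1)/(ne_m)}\geq h$, but this does not follow, since $\sqrt{(V-1)/(ne_m)}<h'=\sqrt{V/(ne_m)}$ and $h$ may lie strictly in between; every member of the hypercube must satisfy the Massart condition with the \emph{given} $h$, so feasibility requires $\delta=\max\bigl(h,\,c\sqrt{(V-1)/(ne_m)}\bigr)$, or absorbing the boundary band $\sqrt{(V-1)/(ne_m)}\leq h\leq h'$ into the first regime, where the two bounds agree up to constants because $V\geq 2$ gives $(V-1)/V\geq 1/2$. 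The remaining delicate item is the one you flag yourself: compatibility of the construction with the separability/realizability assumption of Lemma~\ref{upper bound}, which is precisely why the reference point and the shattering are needed so that every Bayes classifier $g^{*}_{\tau}$ lies in $\mathcal{G}$.
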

\subsection{Limitation}
It can be seen from Lemma\ref{upper bound} and Lemma\ref{lower bound} that both bounds depend on $V$, $n$, $h$ and $e_m$. $h$ evaluates the difficulty of the classification task and $e_m$ represents the minimum of the propensity $e(.)$. When we recall the classification results in Table\ref{tab_imbalance} that evaluate the model's performance under various positive class priors. Both our method and the state-of-art PUL method special for imbalanced data witness a significant descent in all three metrics when the majority of unlabeled data belongs to the positive class. It may be explained by both the upper bounds and the lower bounds mentioned above. Specifically, when the majority of unlabeled data belongs to the positive class, $e_m$ gets lower and both the upper and lower bounds in Lemma\ref{upper bound} and Lemma\ref{lower bound} get higher, making the classification more difficult. It asks for a more powerful model for PUL or a new perspective to tackle PUL. As argued in Section\ref{expriment}, the predictive trends derived from the proposed resampling method can be a viable choice for such imbalanced scenarios. However, compared to the existing reweighting methods, the approach based on trend prediction still requires theoretical analysis. In addition, there are more possible methods worth exploring for additional resampling techniques, trend detection, and subsequent classification.


\end{document}